\title{Faster Rates for Training Max-Margin Markov Networks }
\author{Xinhua Zhang\\
  Dept.\ of Statistics\\
  Purdue University \\
  zhang305@stat.purdue.edu
  \And Ankan Saha\\
  Dept.\ of Computer Science\\
  University of Chicago \\
  ankans@cs.uchicago.edu
  \And S.V$\!.\,$N. Vishwanathan\\
  Dept.\ of Statistics and \\
  Dept.\ of Computer Science\\
  Purdue University\\
  vishy@stat.purdue.edu}
\begin{document}
\maketitle
\begin{abstract}
  Structured output prediction is an important machine learning problem
  both in theory and practice, and the max-margin Markov network (\mcn)
  is an effective approach.  All state-of-the-art algorithms for
  optimizing \mcn\ objectives take at least $O(1/\epsilon)$ number of
  iterations to find an $\epsilon$ accurate solution.  Recent results in
  structured optimization suggest that faster rates are possible by
  exploiting the structure of the objective function. Towards this end
  \cite{Nesterov05} proposed an excessive gap reduction technique based
  on Euclidean projections which converges in $O(1/\sqrt{\epsilon})$
  iterations on strongly convex functions. Unfortunately when applied to
  \mcn s, this approach does not admit graphical model factorization
  which, as in many existing algorithms, is crucial for keeping the cost
  per iteration tractable.  In this paper, we present a new excessive
  gap reduction technique based on Bregman projections which admits
  graphical model factorization naturally, and converges in
  $O(1/\sqrt{\epsilon})$ iterations. Compared with existing algorithms,
  the convergence rate of our method has better dependence on $\epsilon$
  and other parameters of the problem, and can be easily kernelized.
\end{abstract}

\section{Introduction}
\label{sec:Introduction}

In the supervised learning setting, one is given a training set of
labeled data points and the aim is to learn a function which predicts
labels on unseen data points. Sometimes the label space has a rich
internal structure which characterizes the combinatorial or recursive
inter-dependencies of the application domain. It is widely believed that
capturing these dependencies is critical for effectively learning with
\emph{structured output}. Examples of such problems include sequence
labeling, context free grammar parsing, and word alignment.  However,
parameter estimation is generally hard even for simple linear models,
because the size of the label space is potentially exponentially large
(see \eg\ \cite{BakHofSchSmoetal07}). Therefore it is crucial to
exploit the underlying conditional independence assumptions for the sake
of computational tractability. This is often done by defining a graphical
model on the output space, and exploiting the underlying graphical model
factorization to perform computations.

Research in structured prediction can broadly be categorized into two
tracks: Using a maximum a posterior estimate from the exponential family
results in conditional random fields \cite[CRFs,][]{LafMcCPer01}, and a
maximum margin approach leads to max-margin Markov networks
\cite[\mcn s,][]{TasGueKol04}.  Unsurprisingly, these two approaches
share many commonalities: First, they both minimize a regularized risk
with a square norm regularizer.  Second, they assume that there is a
joint feature map $\phivec$ which maps $(\xb, \yb)$ to a feature vector
in $\RR^{p}$.\footnote{We discuss kernels and associated feature maps
  into a Reproducing Kernel Hilbert Space (RKHS) in the appendix.}
Third, they assume a label loss $\ell(\yb, \yb^{i}; \xb^{i})$ which
quantifies the loss of predicting label $\yb$ when the correct label of
input $\xb^{i}$ is $\yb^{i}$. Finally, they assume that the space of
labels $\Ycal$ is endowed with a graphical model structure and that
$\phivec(\xb, \yb)$ and $\ell(\yb, \yb^{i}; \xb^{i})$ factorize
according to the cliques of this graphical model. The main difference is
in the loss function employed. CRFs minimize the $L_{2}$-regularized
logistic loss:
\begin{align}
  \label{eq:crf-objective}
  J(\wb) & = \frac{\lambda}{2} \nbr{\wb}^2 + \frac{1}{n} \sum_{i=1}^{n}
  \log \sum_{\yb \in \Ycal} \exp\rbr{\ell(\yb, \yb^{i}; \xb^{i})
      -\inner{\wb}{\phivec(\xb^{i}, \yb^{i}) - \phivec(\xb^{i}, \yb)}},
\end{align}
while the \mcn s minimize the $L_{2}$-regularized hinge loss
\begin{align}
  \label{eq:m3n-objective}
  J(\wb) & = \frac{\lambda}{2} \nbr{\wb}^2 + \frac{1}{n} \sum_{i=1}^{n}
  \max_{\yb \in \Ycal} \cbr{ \ell(\yb, \yb^{i}; \xb^{i}) -
    \inner{\wb}{\phivec(\xb^{i}, \yb^{i}) - \phivec(\xb^{i}, \yb)}  }.
\end{align}

\newpage
\begin{figure*}[t]
\begin{center}
\subfigure[Primal gap, dual gap, and duality gap]{
    \label{fig:duality_gap}
    \includegraphics[width=8.6cm]{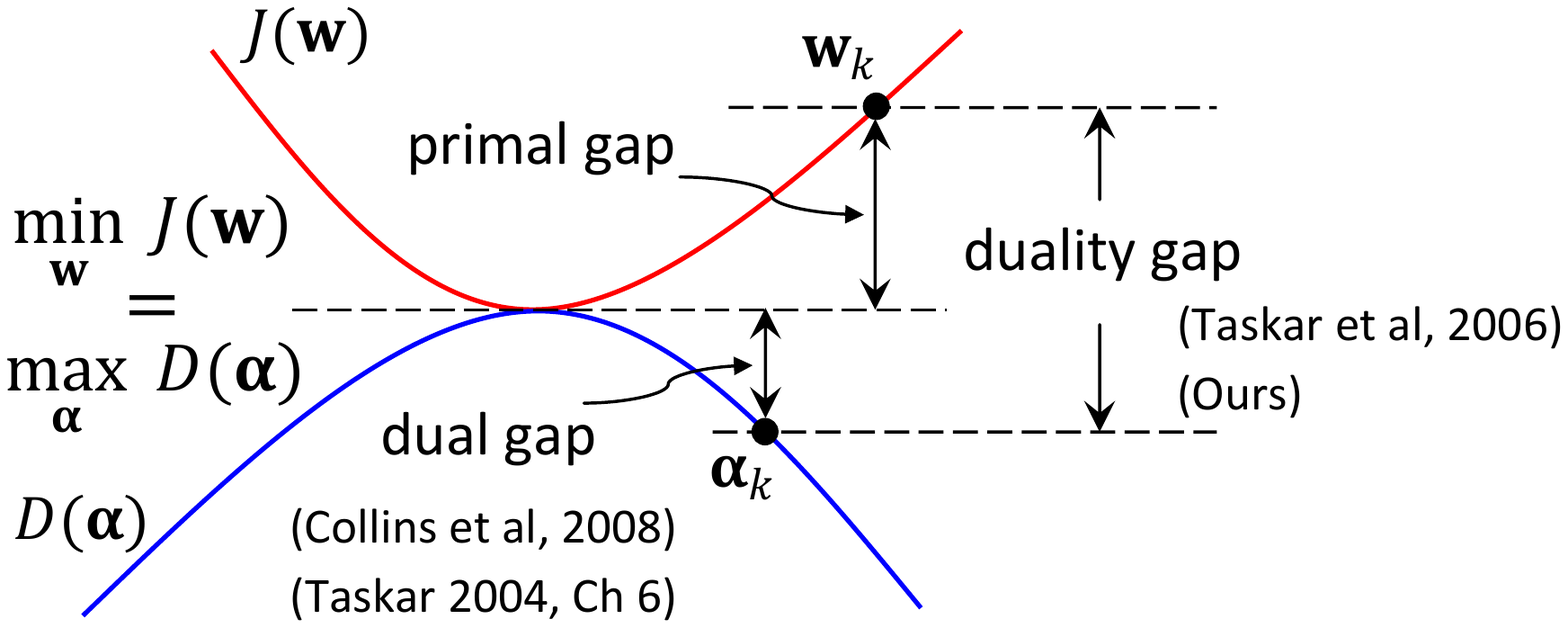}}
\subfigure[\BMRM\ gap (and similarly for \svmstruct)]{
    \label{fig:bmrm_gap}
    \includegraphics[width=6cm]{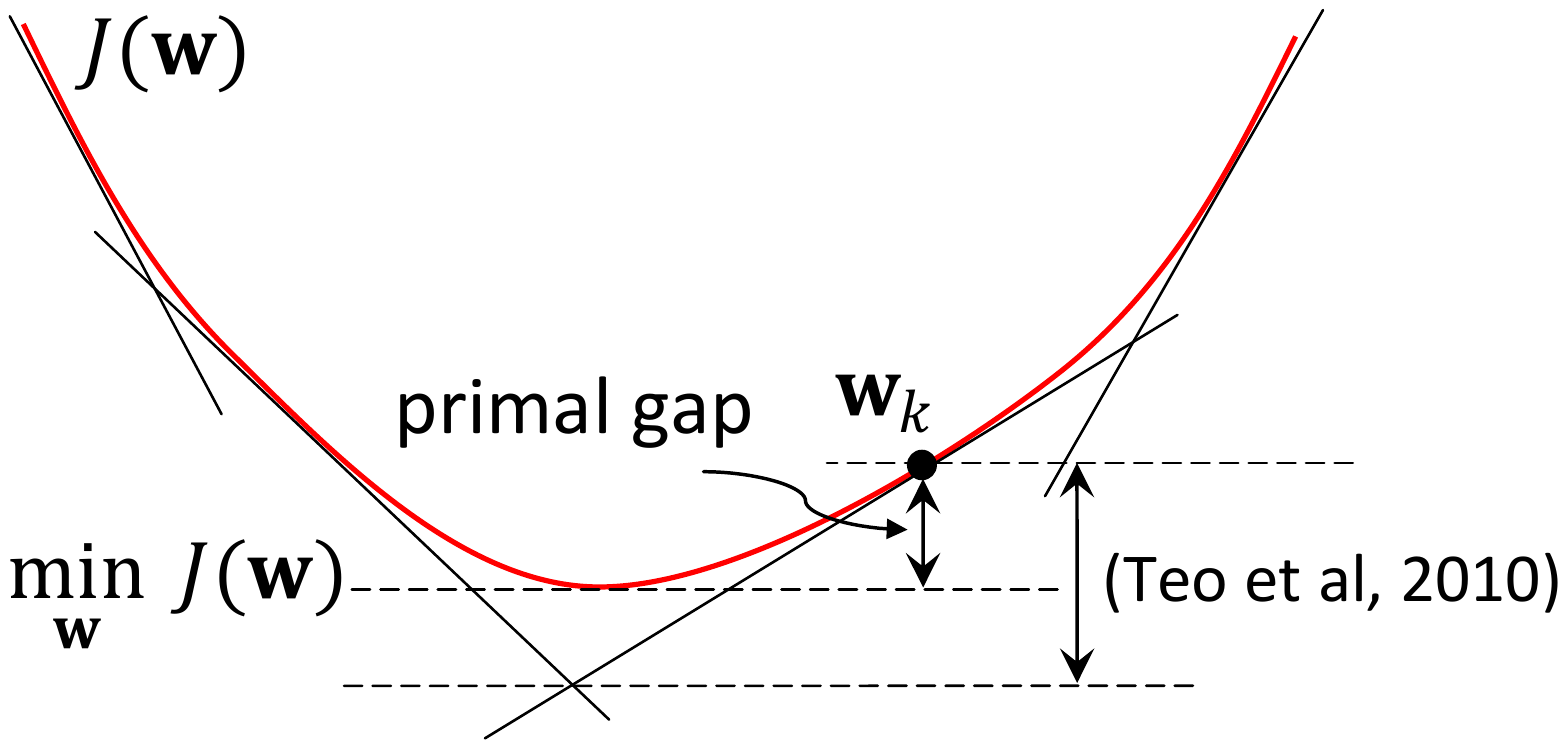}}
  \caption{Illustration of stopping criterion monitored by various
    algorithms; convergence rates are stated with respect to these
    stopping criterion. $D(\val)$ is the Lagrange dual of $J(\wb)$,
    and $\min_{\wb} J(\wb) = \max_{\val} D(\val)$. Neither the primal
    gap nor the dual gap is actually measurable in practice since
    $\min_{\wb} J(\wb)$ (and $\max_{\val} D(\val)$) is
    unknown. \BMRM\ (right) therefore uses a measurable upper bound of
    the primal gap.  \svmstruct\ monitors constraint violation, which
    can be also be translated to an upper bound on the primal gap.}
\label{fig:gaps}
\end{center}
\end{figure*}

\setlength{\extrarowheight}{2pt}

\begin{table}[t!]
  \centering
  \begin{tabularx}{\linewidth}{c|ccc|cc}
    \hline
    \multirow{2}{*}{Optimization algorithm} & \multirow{2}{*}{Primal/Dual} & \multirow{2}{*}{Type of gap} & Oracle & \multicolumn{2}{c}{Convergence rate} \\
    & & & for \mcn &  CRF & \mcn \\
    \hline  \hline
    \BMRM  & \multirow{2}{*}{primal} & \multirow{2}{*}{$\ge$primal gap} & \multirow{2}{*}{max} & \multirow{2}{*}{$O\rbr{\frac{1}{\lambda} \log \frac{1}{\epsilon}}$} & \multirow{2}{*}{$O\rbr{\frac{G^{2}}{\lambda \epsilon}}$} \\
    \cite{TeoVisSmoLe10} & & & & & \\
    \hline
    \svmstruct & \multirow{2}{*}{primal-dual} & constraint & \multirow{2}{*}{max} & \multirow{2}{*}{$n/a$} & \multirow{2}{*}{$O \rbr{\frac{G^{2}}{\lambda \epsilon^2}}$} \\
    \cite{TsoJoaHofAlt05} & & violation & & & \\
    \hline
    Extragradient & \multirow{2}{*}{primal-dual} & \multirow{2}{*}{duality gap} & \multirow{2}{*}{exp} & \multirow{2}{*}{$n/a$} & \multirow{2}{*}{$O\rbr{\frac{\log \abr{\Ycal}}{\epsilon}}$} \\
    \cite{TasLacJor06} & & & & & \\
    \hline
    Exponentiated gradient & \multirow{2}{*}{dual} & \multirow{2}{*}{dual gap} & \multirow{2}{*}{exp} & \multirow{2}{*}{$O\rbr{\frac{1}{\lambda} \log \frac{1}{\epsilon}}$} & \multirow{2}{*}{$O\rbr{\frac{\log \abr{\Ycal}}{\lambda \epsilon}}$} \\
    \cite{ColGloKooetal08} & & & & & \\
    \hline
    \smo\ & \multirow{2}{*}{dual} & \multirow{2}{*}{dual gap} & \multirow{2}{*}{max} & \multirow{2}{*}{$n/a$} & \multirow{2}{*}{$\ge O\rbr{\frac{n^{\ge 1}}{\lambda \epsilon}}$} \\
    \cite[][Chapter 6]{Taskar04} & & & & & \\
    \hline
    Our algorithm & primal-dual & duality gap & exp & $n/a$ & $O \rbr{\sqrt{\frac{\log \abr{\Ycal}}{\lambda \epsilon}}}$ \\
    \hline
  \end{tabularx}
  \caption{Comparison of specialized optimization algorithms for
    training structured prediction models. Primal-dual methods maintain
    estimation sequences in both primal and dual spaces.
    Details of the oracle will be discussed in
    Section~\ref{sec:Discussion}.
    The convergence rate highlights the dependence on both $\epsilon$ and
    some ``constants'' that are often hidden in the $O$ notation: $n$,
    $\lambda$, and the size of the label space $\abr{\Ycal}$. No formal
    convergence rate is known for \smo\ on \mcn, therefore we quote the
    best known rate for training
    binary SVMs due to \cite{LisSim09}.  The term $G$ in the convergence
    rate of \BMRM\ and \svmstruct\ denotes the maximum $L_{2}$ norm of
    the features vectors $\phivec(\xvec^i, \yvec)$.  The convergence rate of Extragradient depends on $\lambda$ in an indirectly way.
  }
  \label{tab:comp_algo}
\end{table}

A large body of literature exists on efficient algorithms for minimizing
the above objective functions. A summary of existing methods, and their
convergence rates (iterations needed to find an $\epsilon$ accurate
solution) can be found in Table \ref{tab:comp_algo}. The $\epsilon$
accuracy of a solution can be measured in many different ways.  As
Figure \ref{fig:gaps} depicts, different algorithms employ different but
somewhat related stopping criterion. This must be borne in mind when
interpreting the convergence rates in Table \ref{tab:comp_algo}.

Since \eqref{eq:crf-objective} is a smooth convex objective, classical
methods such as L-BFGS can directly be applied \cite{ShaPer03}.
Specialized solvers also exist. For instance a primal algorithm based on
bundle methods was proposed by \cite{TeoVisSmoLe10}, while a dual
algorithm for the same problem was proposed by
\cite{ColGloKooetal08}. Both algorithms converge at
$O(\frac{1}{\lambda}\log(1/\epsilon))$ rates to an $\epsilon$ accurate
solution, and, remarkably, their convergence rates are independent of
$n$ the number of data points, and $|\Ycal|$ the size of the label
space. It is widely believed in
optimization (see \eg\ Section 9.3 of \cite{BoyVan04}) that
unconstrained smooth strongly convex objective functions can be
minimized in $O(\log(1/\epsilon))$ iterations, and these specialized
optimizers also achieve this rate.

On the other hand, since \eqref{eq:m3n-objective} is a non-smooth convex
function, efficient algorithms are harder to come by. \svmstruct\ was
one of the first specialized algorithms to tackle this problem, and
\cite{TsoJoaHofAlt05} derived an $O(G^{2}/\lambda \epsilon^{2})$ rate
of convergence. Here $G$ denotes the maximum $L_{2}$ norm of the feature
vectors $\phivec(\xvec^i, \yvec)$. By refining their analysis,
\cite{TeoVisSmoLe10} proved a $O(G^{2}/\lambda \epsilon)$ rate of
convergence for a related but more general algorithm, which they called
bundle methods for regularized risk minimization (\BMRM).  At first glance, it looks
like the rates of convergence of these algorithms are independent of
$|\Ycal|$. This is somewhat misleading because, although the dependence
is not direct, the convergence rates depend on $G$, which is in turn
implicitly related to the size of $\Ycal$.

Optimization algorithms which solve \eqref{eq:m3n-objective} in the dual
have also been developed. For instance, the algorithm proposed by
\cite{ColGloKooetal08} performs exponentiated gradient descent in the
dual and converges at $O\rbr{\frac{\log \abr{\Ycal}}{\lambda \epsilon}}$
rates. Again, these rates of convergence are not surprising given the
well established lower bounds of \cite{NemYud83} who show that, in
general, non-smooth optimization problems cannot be solved in fewer than
$\Omega(1/\epsilon)$ iterations by solvers which treat the objective
function as a black box.

In this paper, we present an algorithm that provably converges to an
$\epsilon$ accurate solution of \eqref{eq:m3n-objective} in
$O\rbr{\sqrt{\frac{\log \abr{\Ycal}}{\lambda \epsilon}}}$
iterations. This does not contradict the lower bound because our
algorithm is not a general purpose black box optimizer. In fact, it
exploits the special form of the objective function
\eqref{eq:m3n-objective}.  Before launching into the technical details
we would like to highlight some important features of our
algorithm. First, compared to existing algorithms our convergence rates
are better in terms of $|\Ycal|$, $\lambda$, and $\epsilon$. Second, our
convergence analysis is tighter in that our rates are with respect to
the duality gap. Not only is the duality gap computable, it also upper
bounds the primal and dual gaps used by other algorithms (see Figure
\ref{fig:gaps}). Finally, our cost per iteration is comparable with
other algorithms.

To derive our algorithm we extend the recent excessive gap technique of
\cite{Nesterov05a} to Bregman projections and establish rates of
convergence (Section \ref{sec:Excessgaptechn}). This extension is
important because the original gradient based algorithm for strongly
convex objectives by \cite{Nesterov05a} does not admit graphical model
factorizations, which are crucial for efficiency in structured
prediction problems. We apply our resulting algorithm to the \mcn\
objective in Section \ref{sec:TrainMaximMarg}. A straightforward
implementation requires $O(|\Ycal|)$ computational complexities per iteration, which makes
it prohibitively expensive. We show that by exploiting the graphical
model structure of $\Ycal$ the cost per iteration can be reduced to
$O(\log|\Ycal|)$ (Section \ref{sec:EfficImplExpl}). Finally we contrast
our algorithm with existing techniques in Section
\ref{sec:Discussion}. The appendix contains some technical proofs and
details on how to handle kernels.

\section{Excessive Gap Technique with Bregman Projection}
\label{sec:Excessgaptechn}

The following three concepts from convex analysis are extensively used
in the sequel.  Define $\RRbar := \RR \cup \cbr{\infty}$.

\begin{definition}
  \label{def:strong-convex}
  A convex function $f:\RR^{n} \to \RRbar$ is strongly
  convex with respect to a norm $\|\cdot\|$ if there exists a constant
  $\rho > 0$ such that $f - \frac{\rho}{2} \|\cdot\|^{2}$ is convex.
  $\rho$ is called the modulus of strong convexity of $f$, and for
  brevity we will call $f$ $\rho$-strongly convex.
\end{definition}
\begin{definition}
  \label{def:lip-cont-grad}
  Suppose a function $f: \RR^n \to \RRbar$ is differentiable on $Q \subseteq
  \RR^n$.  Then $f$ is said to have Lipschitz continuous gradient (\lcg) with
  respect to a norm $\|\cdot\|$ if there exists a constant $L$ such that
  \begin{align}
    \label{eq:lip-cont-grad}
    \| \nabla f(\wb) - \nabla f(\wb')\| \leq L \| \wb - \wb'\| \qquad
    \forall\ \wb, \wb'\in Q.
  \end{align}
For brevity, we will call $f$ $L$-\lcg.
\end{definition}
\begin{definition}
  \label{def:fenchel_dual}
  The Fenchel dual of a function $f: \RR^n \to \RRbar$ is a function $f^{\star}:
  \RR^n \to \RRbar$ defined by
  \begin{align}
    \label{eq:fenchel-dual}
    f^{\star}(\wb^{\star}) = \sup_{\wb \in \RR^n}
    \cbr{\inner{\wb}{\wb^{\star}} - f(\wb)}
  \end{align}
\end{definition}
Strong convexity and \lcg\ are related by Fenchel duality according to
the following lemma:
\begin{lemma}[{\cite[][Theorem 4.2.1 and 4.2.2]{HirLem93}}]
$\phantom{.}$
\label{theorem:SC_LCG}
\begin{enumerate}
\item If $f: \RR^n \to \RRbar$ is $\rho$-strongly convex, then $f^{\star}$ is
  finite on $\RR^n$ and $f^{\star}$ is $\frac{1}{\rho}$-\lcg.
\item If $f: \RR^n \to \RR$ is convex, differentiable on $\RR^n$, and $L$-\lcg,
  then $f^{\star}$ is $\frac{1}{L}$-strongly convex.
\end{enumerate}
\end{lemma}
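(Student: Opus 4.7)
The plan is to prove each direction by relating the defining inequalities of strong convexity and Lipschitz-continuous gradient (\lcg) through the Fenchel conjugate's first-order optimality condition. I will use throughout that, by Fenchel-Young, $f^\star(\wb^\star) = \inner{\wb(\wb^\star)}{\wb^\star} - f(\wb(\wb^\star))$ whenever $\wb(\wb^\star) \in \argmax_{\wb} \cbr{\inner{\wb}{\wb^\star} - f(\wb)}$ and that, at any such maximizer, $\wb^\star \in \partial f(\wb(\wb^\star))$.

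For part 1, I would first check that $f^\star$ is finite on $\RR^n$. Since $f$ is $\rho$-strongly convex, pick any $\wb_0 \in \dom f$ and any $\gb_0 \in \partial f(\wb_0)$; then $f(\wb) \geq f(\wb_0) + \inner{\gb_0}{\wb - \wb_0} + \frac{\rho}{2} \nbr{\wb - \wb_0}^2$, which forces $\inner{\wb}{\wb^\star} - f(\wb) \to -\infty$ as $\nbr{\wb} \to \infty$. So the supremum in \eqref{eq:fenchel-dual} is attained, and by strict convexity of $f$ (immediate from $\rho > 0$), the maximizer $\wb(\wb^\star)$ is unique. Writing $\nabla f^\star(\wb^\star) := \wb(\wb^\star)$, I then derive Lipschitz continuity by applying the strong-convexity inequality at the two optimality points: for $\wb_i = \nabla f^\star(\wb_i^\star)$ with $\wb_i^\star \in \partial f(\wb_i)$, adding the two inequalities
\begin{align*}
 f(\wb_2) &\geq f(\wb_1) + \inner{\wb_1^\star}{\wb_2 - \wb_1} + \tfrac{\rho}{2}\nbr{\wb_2 - \wb_1}^2, \\
 f(\wb_1) &\geq f(\wb_2) + \inner{\wb_2^\star}{\wb_1 - \wb_2} + \tfrac{\rho}{2}\nbr{\wb_1 - \wb_2}^2,
\end{align*}
yields $\inner{\wb_1^\star - \wb_2^\star}{\wb_1 - \wb_2} \geq \rho \nbr{\wb_1 - \wb_2}^2$. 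Combining with Cauchy-Schwarz gives $\nbr{\nabla f^\star(\wb_1^\star) - \nabla f^\star(\wb_2^\star)} \leq \frac{1}{\rho} \nbr{\wb_1^\star - \wb_2^\star}$, which is the desired $\frac{1}{\rho}$-\lcg\ property.

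For part 2, the main obstacle is the harder direction: producing a quadratic lower bound on $f^\star$ from an upper bound on the growth of $\nabla f$. I would proceed via a co-coercivity step (essentially the Baillon-Haddad identity). Using that $f$ is convex and $L$-\lcg, a standard descent-lemma argument gives $f(\wb') \leq f(\wb) + \inner{\nabla f(\wb)}{\wb' - \wb} + \frac{L}{2}\nbr{\wb' - \wb}^2$; minimizing both sides over $\wb'$ after fixing an arbitrary $\vb$ and applying this to the auxiliary convex functions $\phi_{\vb}(\wb) := f(\wb) - \inner{\nabla f(\vb)}{\wb}$ yields the co-coercivity inequality
\begin{align*}
\inner{\nabla f(\wb_1) - \nabla f(\wb_2)}{\wb_1 - \wb_2} \geq \tfrac{1}{L} \nbr{\nabla f(\wb_1) - \nabla f(\wb_2)}^2.
\end{align*}
Now I translate this into strong convexity of $f^\star$. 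For any $\wb_1^\star, \wb_2^\star$ in the image of $\nabla f$, pick $\wb_i$ with $\nabla f(\wb_i) = \wb_i^\star$; by Fenchel-Young these $\wb_i$ are subgradients of $f^\star$ at $\wb_i^\star$. Substituting into the co-coercivity inequality gives $\inner{\wb_1 - \wb_2}{\wb_1^\star - \wb_2^\star} \geq \frac{1}{L}\nbr{\wb_1^\star - \wb_2^\star}^2$, i.e.\ the subgradient mapping of $f^\star$ is $\frac{1}{L}$-strongly monotone. Integrating this monotonicity along the segment $[\wb_1^\star, \wb_2^\star]$ (using the characterization that a convex function is $\mu$-strongly convex iff its subgradient is $\mu$-strongly monotone) delivers
\begin{align*}
f^\star(\wb_2^\star) \geq f^\star(\wb_1^\star) + \inner{\wb_1}{\wb_2^\star - \wb_1^\star} + \tfrac{1}{2L}\nbr{\wb_1^\star - \wb_2^\star}^2.
\end{align*}
Outside the range of $\nabla f$, $f^\star$ equals $+\infty$ by a standard conjugacy argument, so the inequality trivially extends, establishing $\frac{1}{L}$-strong convexity of $f^\star$.

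The main obstacle is the co-coercivity step in part 2, since mere Lipschitzness of $\nabla f$ by itself is weaker than co-coercivity; convexity of $f$ is essential there, and a clean way to obtain it is the auxiliary-function trick sketched above (minimize $\phi_{\vb}$ globally, using that its minimum equals $-f^\star(\nabla f(\vb))$ only implicitly). Everything else reduces to bookkeeping with the Fenchel-Young identity and subdifferential calculus.
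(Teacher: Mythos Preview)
The paper does not supply a proof of this lemma; it is quoted directly from \cite[Theorem 4.2.1 and 4.2.2]{HirLem93}. So there is nothing in the paper to compare your argument against. That said, your plan is a standard and essentially correct derivation of the result.

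Two small caveats. First, Definitions~\ref{def:strong-convex} and~\ref{def:lip-cont-grad} are stated for an arbitrary norm, but your Cauchy--Schwarz step in part~1 tacitly assumes the Euclidean inner product; in the general-norm setting one uses $\inner{\wb_1^\star-\wb_2^\star}{\wb_1-\wb_2}\le\nbr{\wb_1^\star-\wb_2^\star}_*\,\nbr{\wb_1-\wb_2}$ (definition of the dual norm), and the conclusion is that $f^\star$ is $\tfrac{1}{\rho}$-\lcg\ with respect to the dual norm (and analogously in part~2). This is consistent with how the paper later pairs the $L_2$ norm on $Q_1$ with the $L_1$ norm on $Q_2$. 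Second, the assertion in part~2 that $f^\star=+\infty$ outside the range of $\nabla f$ is not quite right: $f^\star$ can be finite at points with no $\nabla f$-preimage. For instance $f(w)=\sqrt{1+w^2}$ is convex and $1$-\lcg\ on $\RR$ with $\nabla f(\RR)=(-1,1)$, yet $f^\star(\pm 1)=0$. The strong-convexity inequality still extends to such boundary points by lower semicontinuity of $f^\star$, so your overall conclusion survives, but that sentence should be amended.
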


Let $Q_{1}$ and $Q_{2}$ be subsets of Euclidean spaces and $A$ be a linear map from $Q_1$ to $Q_2$.  Suppose $f$ and $g$ are convex functions defined on $Q_1$ and $Q_2$ respectively.  We are interested in the following optimization problem:
\begin{align}
  \label{eq:primal}
  \min_{\wb \in Q_{1}} J(\wb) \quad \text{ where } J(\wb) := f(\wb) +
  \gstar(A \wb) = f(\wb) + \max_{\val \in Q_2} \cbr{\inner{A
      \wb}{\val}-g(\val)}.
\end{align}
We will make the following standard assumptions: a) $Q_2$ is compact; b) with respect to a certain norm on $Q_1$, the function $f$ defined on $Q_1$ is
$\rho$-strongly convex but not necessarily \lcg, and c) with respect to a certain norm on $Q_2$, the function
$g$ defined on $Q_{2}$ is $L_{g}$-\lcg\ and convex, but not necessarily strongly
convex.  If we identify $f(\wb)$ with the regularizer and $\gstar(A \wb)$ with
the loss function, then it is clear that \eqref{eq:primal} has the same form as
\eqref{eq:crf-objective} and \eqref{eq:m3n-objective}. We will exploit this
observation in Section \ref{sec:TrainMaximMarg}.

The key difficulty in solving \eqref{eq:primal} arises because $\gstar$
and hence $J$ may potentially be non-smooth. Our aim is to uniformly
approximate $J(\wb)$ with a smooth and strongly convex function.
Towards this end let $d$ be a $\sigma$ strongly convex smooth function
with the following properties:
\begin{align*}
  \min_{\val \in Q_2} d(\val) = 0,
  \quad \val_{0} = \argmin_{\val \in Q_2} d(\val),
  \text{ and }  D := \max_{\val \in Q_2} d(\val).
\end{align*}
In optimization parlance, $d$ is called a prox-function. Let
$\mu \in \RR$ be an arbitrary positive constant, and
\begin{align}
  \label{eq:gstar-with-d2}
  (g + \mu d)^{\star}(\wb) = \sup_{\val \in Q_2}
  \cbr{\inner{\val}{\wb} - g(\val) - \mu \, d(\val)}.
\end{align}
If $D < \infty$ then it is easy to see that $(g + \mu \, d)^{\star}$ is
uniformly close to $\gstar$:
\begin{align}
  \label{eq:bounds-on-gstar}
  \gstar(\wb) - \mu D \leq (g + \mu d)^{\star}(\wb) \leq
  \gstar(\wb).
\end{align}
We will use $(g + \mu d)^{\star}$ to define a new objective function
\begin{align}
\label{eq:reg_primal}
  J_{\mu}(\wb) &:= f(\wb) + (g + \mu d)^{\star}(A\wb)
   = f(\wb) + \max_{\val \in Q_2} \cbr{\inner{A \wb}{\val}-g(\val) -
    \mu \, d(\val)}.
\end{align}

If some mild constraint qualifications hold \cite[\eg Theorem
3.3.5][]{BorLew00} one can write the dual $D(\val)$ of $J(\wb)$ using
$A^{\top}$ (the transpose of $A$) as
\begin{align}
  \label{eq:dual}
  D(\val) := -g(\val) - \fstar(-A^{\top} \val) = -g(\val) - \max_{\wb
    \in Q_{1}} \cbr{\inner{-A \wb}{\val} - f(\wb)},
\end{align}
and assert the following
\begin{align}
  \label{eq:borlew}
  \inf_{\wb \in Q_{1}} J(\wb) = \sup_{\val \in Q_{2}} D(\val), \quad
  \text{and} \quad J(\wb) \ge D(\val) \quad \forall\ \wb \in Q_{1}, \val
  \in Q_{2}.
\end{align}
The key idea of excessive gap minimization pioneered by
\cite{Nesterov05a} is to maintain two estimation sequences
$\cbr{\wb_{k}}$ and $\cbr{\val_{k}}$, together with a diminishing
sequence $\cbr{\mu_{k}}$ such that
\begin{equation}
  \label{eq:excessive_gap_condition}
  \boxed{J_{\mu_{k}}(\wb_{k}) \le D (\val_{k}), \text{ and } \lim_{k \to \infty} \mu_{k} = 0.}
\end{equation}
The idea is illustrated in Figure \ref{fig:ex_gap}.  In conjunction with
\eqref{eq:borlew} and \eqref{eq:bounds-on-gstar}, it is not hard to see
that $\cbr{\wb_{k}}$ and $\cbr{\val_{k}}$ approach the solution of
$\min_{\wb} J(\wb) = \max_{\val} D(\val)$. Using \eqref{eq:bounds-on-gstar},
\eqref{eq:reg_primal}, and \eqref{eq:excessive_gap_condition}, we can derive the rate of convergence of this
algorithm:
\begin{align}
  \label{eq:roc}
  J(\wb_{k}) - D(\val_{k}) \le J_{\mu_{k}}(\wb_{k}) + \mu_{k} D - D
  (\val_{k}) \le  \mu_{k} D.
\end{align}
In other words, the duality gap is reduced at the same rate
at which $\mu_{k}$ approaches $0$. All that remains to turn this idea
into an implementable algorithm is to answer the following two
questions:
\begin{enumerate}
\item How to efficiently find initial points $\wb_1$, $\val_{1}$ and
  $\mu_{1}$ that satisfy \eqref{eq:excessive_gap_condition}.
\item Given $\wb_{k}$, $\val_{k}$, and $\mu_{k}$, how to
  \emph{efficiently} find $\wb_{k+1}$, $\val_{k+1}$, and $\mu_{k+1}$
  which maintain \eqref{eq:excessive_gap_condition}.
\end{enumerate}

\begin{figure}[t]
\centering
    \includegraphics[width=4.5cm]{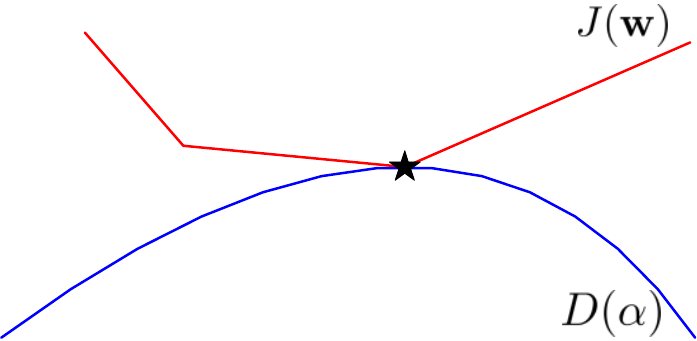}
    \hspace{0.3em}
    \includegraphics[width=5cm]{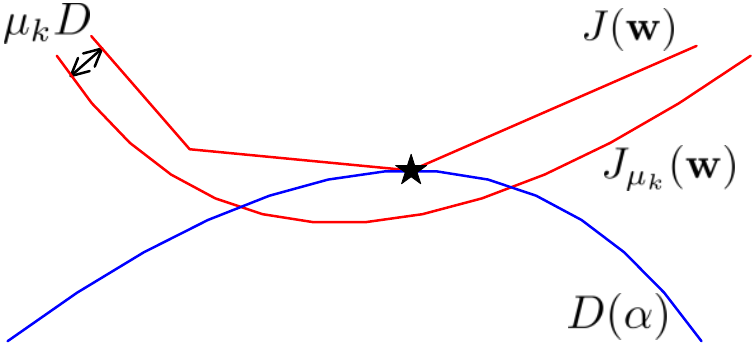}
    \hspace{0.3em}
    \includegraphics[width=5cm]{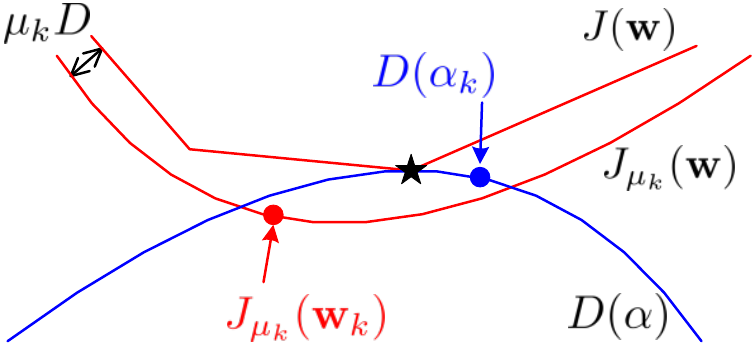}
\caption{Illustration of excessive gap.  When $\mu_k$ decreases to 0, the ``overlap" of $J_{\mu_k}(\wvec)$ and $D(\alphavec)$ becomes narrower and narrower.  And both $J_{\mu_k}(\wvec_k)$ and $D(\val_k)$ need to lie in this ``narrow tube".}
\label{fig:ex_gap}
\end{figure}

To achieve the best possible convergence rate it is desirable to anneal
$\mu_{k}$ as fast as possible while still allowing $\wb_{k}$ and
$\val_{k}$ to be updated efficiently. \cite{Nesterov05a} gave a
solution based on Euclidean projections, where $\mu_{k}$ decays at
$1/k^{2}$ rate and all updates can be computed in closed form.  We now
extend his ideas to updates based on Bregman
projections\footnote{\cite{Nesterov05a} did discuss updates based on
  Bregman projections, but just for the case where $f$ is convex rather
  than strongly convex.  Here, we show how to improve the convergence
  rate from $O(1/\epsilon)$ to $O(1/\sqrt{\epsilon})$ when $f$ is
  strongly convex.}, which will be the key to our application to
structured prediction problems later.  Since $d$ is differentiable, we
can define a Bregman divergence based on it:
\begin{align}
  \label{eq:bregman}
  \Delta(\valbar, \val) := d(\valbar) - d(\val) - \inner{\grad
    d(\val)}{\valbar - \val}.
\end{align}
Given a point $\val$ and a direction $\gvec$, we can define the Bregman
projection as:
\begin{align*}
  V(\val, \gvec) := \argmin_{\valbar \in Q_2}\{ \Delta(\valbar, \val) +
  \inner{\gvec}{\valbar - \val}\} = \argmin_{\valbar \in Q_2} d(\valbar)
  - \inner{\grad d(\val) - \gvec}{\valbar}.
\end{align*}

Since $f$ is assumed to be $\rho$-strongly convex, it follows from Lemma
\ref{theorem:SC_LCG} that $-D(\val)$ is \lcg.  If we denote its \lcg\ modulus
as $L$, then an easy calculation \cite[e.g.\ Eq.\ (7.2)][]{Nesterov05a}
shows that
\begin{align}
  \label{eq:lcg-of-D}
  L = \frac{\nbr{A}_{1,2}^2}{\rho} + L_g, \qquad \text{where}
  \nbr{A}_{1,2} := \max_{\nbr{\wb} = \nbr{\val} = 1} \inner{A
    \wb}{\val}.
\end{align}
For notational convenience, we define the following two maps:
\begin{subequations}
  \label{eq:alpha_and_w}
  \begin{align}
    \label{eq:alpha_to_w}
    \wb (\val) &:= \argmax_{\wb \in Q_1} \inner{-A \wb}{\val} - f(\wb) =
    \grad \fstar(-A^{\top} \val) \\
    \label{eq:w_to_alpha}
    \val_{\mu} (\wb) &:= \argmax_{\val \in Q_2} \cbr{\inner{A \wb}{\val} -
    g(\val) - \mu d(\val)} = \grad (g + \mu d)^{\star} (A \wb).
  \end{align}
\end{subequations}
Since both $f$ and $(g + \mu d)$ are strongly convex, the above maps are
unique and well defined.  With this notation in place we now describe
our excessive gap minimization method in Algorithm
\ref{algo:nesterov05_bregman}.
\begin{algorithm}[t]
  \caption{Excessive gap minimization}
  \label{algo:nesterov05_bregman}

  \KwIn{Function $f$ which is strongly convex, convex function $g$ which is \lcg.}
  \KwOut{Sequences $\cbr{\wb_{k}}$, $\cbr{\val_{k}}$, and $\cbr{\mu_{k}}$
    that satisfy \eqref{eq:excessive_gap_condition}, with
    $\lim_{k \to \infty} \mu_{k} = 0$.}

  Initialize: {Let $\val_0$ = minimizer of $d$ over $Q_2$, $\mu_{1} = \frac{L}{\sigma}$, $\wb_1 =
    \wb(\val_{0})$, $\val_{1} = V\rbr{\val_{0}, -\frac{1}{\mu_{1}}
      \grad D(\val_{0})}$.}\;

  \For{$k = 1, 2, \ldots$}{

    $\tau_{k} \leftarrow \frac{2}{k+3}$.

    $\valhat \leftarrow (1-\tau_{k}) \val_{k} + \tau_{k} \val_{\mu_{k}}(\wb_{k})$.

    $\wb_{k+1} \leftarrow (1-\tau_{k}) \wb_{k} + \tau_{k} \wb(\valhat)$.

    $\valtil \leftarrow V\rbr{\val_{\mu_{k}}(\wb_{k}),
      \frac{-\tau_{k}}{(1-\tau_{k}) \mu_{k}} \grad D(\valhat) }$.

    $\val_{k+1} \leftarrow (1-\tau_{k}) \val_{k} + \tau_{k} \valtil$.

    $\mu_{k+1} \leftarrow (1 - \tau_{k}) \mu_{k}$.
  }
\end{algorithm}
Unrolling the recursive update for $\mu_{k+1}$ yields
\begin{align}
  \label{eq:mu2-update}
  \mu_{k+1} = (1 - \tau_{k})\, \mu_{k} = \frac{k+1}{k+3} \,
  \mu_{k} = \frac{(k+1)(k) \ldots 2}{(k+3)(k+2)\ldots 4}
  \frac{L}{\sigma} = \frac{6}{(k+3)(k+2)} \frac{L}{\sigma}.
\end{align}
Plugging this into \eqref{eq:roc} and using \eqref{eq:lcg-of-D}
immediately yields a $O(1/\sqrt{\epsilon})$ rate of convergence of our
algorithm:
\begin{theorem}[Rate of convergence for duality gap]
\label{thm:rate_conv_duality_gap}
The sequences $\cbr{\wb_k}$ and $\cbr{\val_k}$ in Algorithm \ref{algo:nesterov05_bregman} satisfy
\begin{align}
\label{eq:convergence}
  J(\wb_{k}) - D(\val_{k}) &\le \frac{6LD}{\sigma (k+1) (k+2)} =
  \frac{6D}{\sigma (k+1) (k+2)} \rbr{\frac{\nbr{A}_{1,2}^2}{\rho} +
    L_g}.
\end{align}
\end{theorem}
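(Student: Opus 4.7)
The plan is to establish the excessive gap condition \eqref{eq:excessive_gap_condition} inductively for all iterates produced by Algorithm~\ref{algo:nesterov05_bregman}, and then combine it with the rate identity \eqref{eq:roc} and the explicit unrolling \eqref{eq:mu2-update}. Once the inductive invariant is in place, the theorem follows by chaining
\[
J(\wb_k)-D(\val_k)\le \mu_k D=\frac{6LD}{\sigma(k+1)(k+2)}
\]
and substituting $L=\nbr{A}_{1,2}^2/\rho+L_g$ from \eqref{eq:lcg-of-D}. Thus the whole argument hinges on verifying \eqref{eq:excessive_gap_condition} at every $k$.

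For the base case, I would exploit the fact that by Lemma~\ref{theorem:SC_LCG} the function $-D$ is $L$-\lcg{} with $L$ given in \eqref{eq:lcg-of-D}. Starting from $\val_0=\argmin d$, the choice $\val_1=V(\val_0,-\mu_1^{-1}\nabla D(\val_0))$ with $\mu_1=L/\sigma$ is essentially one Bregman gradient step on $-D$ from $\val_0$; the standard descent estimate for an $L$-\lcg{} function combined with the $\sigma$-strong convexity of $d$ (which gives $\Delta(\valbar,\val)\ge\tfrac{\sigma}{2}\nbr{\valbar-\val}^2$) yields $D(\val_1)\ge D(\val_0)+\text{(a Bregman remainder)}$. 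A short calculation then shows $J_{\mu_1}(\wb_1)\le D(\val_1)$, using that $\wb_1=\wb(\val_0)$ so that $J_{\mu_1}(\wb_1)$ is the value of the smoothed saddle objective after one primal best response.

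For the inductive step, assume $J_{\mu_k}(\wb_k)\le D(\val_k)$. The heart of the argument is the Nesterov-style bound
\[
D(\valhat)+\inner{\nabla D(\valhat)}{\val_{k+1}-\valhat}-\tfrac{L}{2\sigma}\mu_{k+1}\cdot\text{(step size terms)} \le D(\val_{k+1}),
\]
obtained from the $L$-\lcg{} property of $-D$ and the optimality of the Bregman projection $\valtil=V\bigl(\val_{\mu_k}(\wb_k),-\tfrac{\tau_k}{(1-\tau_k)\mu_k}\nabla D(\valhat)\bigr)$, which provides a three-point inequality
\[
\inner{-\nabla D(\valhat)}{\val-\valtil}\ge \tfrac{(1-\tau_k)\mu_k}{\tau_k}\bigl[\Delta(\val,\val_{\mu_k}(\wb_k))-\Delta(\val,\valtil)-\Delta(\valtil,\val_{\mu_k}(\wb_k))\bigr]
\]
for every $\val\in Q_2$. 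Picking $\val=\val_{\mu_k}(\wb_k)$, using convexity of $J_{\mu_{k+1}}$ along the segment from $\wb_k$ to $\wb(\valhat)$ with weights $(1-\tau_k,\tau_k)$, and using $\mu_{k+1}=(1-\tau_k)\mu_k$, the cross terms cancel exactly and the inductive hypothesis $J_{\mu_k}(\wb_k)\le D(\val_k)$ closes the inequality to $J_{\mu_{k+1}}(\wb_{k+1})\le D(\val_{k+1})$ provided
\[
\tau_k^2\,\tfrac{L}{\sigma}\le (1-\tau_k)\mu_k=\mu_{k+1}.
\]
An induction on the choice $\tau_k=2/(k+3)$ and the unrolled formula \eqref{eq:mu2-update} shows this inequality holds with equality at $k=1$ and strictly afterwards.

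The main obstacle is this last inductive calculation: getting the Bregman three-point identity, the \lcg{} descent bound for $-D$, and the convex combination bound for $J_{\mu_{k+1}}$ to telescope cleanly with the shrinking step sizes $\tau_k$ and smoothing parameters $\mu_k$. This is the place where replacing Euclidean projections by general Bregman projections matters, and where the parameter choices $\tau_k=2/(k+3)$ and $\mu_1=L/\sigma$ are forced. Once the invariant is verified, \eqref{eq:roc} combined with \eqref{eq:mu2-update} and \eqref{eq:lcg-of-D} delivers \eqref{eq:convergence} immediately.
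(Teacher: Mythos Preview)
Your high-level reduction is exactly the paper's: establish the invariant \eqref{eq:excessive_gap_condition} for all $k$, then chain \eqref{eq:roc}, \eqref{eq:mu2-update}, and \eqref{eq:lcg-of-D} to obtain \eqref{eq:convergence}. The base case sketch is also essentially correct and mirrors the paper.

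The inductive step, however, has a genuine gap. You propose to use ``convexity of $J_{\mu_{k+1}}$ along the segment from $\wb_k$ to $\wb(\valhat)$'' and then invoke the hypothesis $J_{\mu_k}(\wb_k)\le D(\val_k)$. But convexity of $J_{\mu_{k+1}}$ gives you $(1-\tau_k)J_{\mu_{k+1}}(\wb_k)$, and since $\mu_{k+1}<\mu_k$ we have $J_{\mu_{k+1}}(\wb_k)\ge J_{\mu_k}(\wb_k)$, so the inductive hypothesis does not apply. The paper avoids this by \emph{not} splitting the $\max_{\val}$: it writes $J_{\mu_{k+1}}(\wb_{k+1})\le \max_{\val}\{(1-\tau_k)T_1(\val)+\tau_k T_2(\val)\}$ with a single outer maximum, using only convexity of $f$ and linearity of $\inner{A\cdot}{\val}$. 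Because $\mu_{k+1}=(1-\tau_k)\mu_k$, the entire $-\mu_{k+1}d(\val)$ term lands in $T_1$ with coefficient $\mu_k$, which is precisely what lets the optimality condition of $\vbeta=\val_{\mu_k}(\wb_k)$ and then the hypothesis $J_{\mu_k}(\wb_k)\le D(\val_k)$ be used.

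A second missing ingredient is the bridge between the dual linearization and the primal saddle function. The paper uses the identity (its Lemma~\ref{lamma:nesterov:helper_alpha})
\[
D(\val)+\inner{\grad D(\val)}{\valbar-\val}\ \ge\ -g(\valbar)+\inner{A\wb(\val)}{\valbar}+f(\wb(\val)),
\]
both in the base case and to bound $T_2$. Your sketch jumps from a descent bound on $D$ and a Bregman three-point inequality directly to a bound on $J_{\mu_{k+1}}(\wb_{k+1})$ without any such device, and ``picking $\val=\val_{\mu_k}(\wb_k)$'' in the three-point inequality does not by itself produce the needed comparison with $J_{\mu_{k+1}}$. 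Once you keep the single $\max_\val$, use the above lemma for the $\tau_k$-piece, and the optimality of $\vbeta$ for the $(1-\tau_k)$-piece, the minimization defining $\valtil$ appears naturally, and the final chain $\Delta(\valtil,\vbeta)\ge\frac{\sigma}{2}\|\valtil-\vbeta\|^2$ together with $\tau_k^2 L/\sigma\le \mu_{k+1}$ and $L$-\lcg{} of $-D$ closes the loop exactly as you anticipated.
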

All that remains is to show that
\begin{theorem}
  \label{thm:excessive_condi_satisfy}
  The update rule of Algorithm \ref{algo:nesterov05_bregman} guarantees
  that \eqref{eq:excessive_gap_condition} is satisfied for all $k \ge
  1$.
\end{theorem}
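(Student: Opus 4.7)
The plan is to induct on $k$, following Nesterov's excessive-gap blueprint \cite{Nesterov05a} but with Bregman projections in place of Euclidean ones. The four tools I would use throughout are: (i) LCG of $-D$ with modulus $L = \nbr{A}_{1,2}^2/\rho + L_g$ from \eqref{eq:lcg-of-D}; (ii) $\sigma$-strong convexity of $d$, which gives $\Delta(\valbar, \val) \ge \tfrac{\sigma}{2}\|\valbar - \val\|^2$ and the Bregman three-point identity $\Delta(a, c) - \Delta(a, b) - \Delta(b, c) = \inner{\grad d(b) - \grad d(c)}{a - b}$; (iii) the first-order optimality of the Bregman projection $V(\cdot, \cdot)$; and (iv) Fenchel--Young duality for $(f, \fstar)$.

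For the base case ($k=1$), I would bound both $J_{\mu_1}(\wb_1)$ from above and $D(\val_1)$ from below using $\val_0$ as a pivot. Fenchel--Young at $\wb_1 = \wb(\val_0)$ rewrites $J_{\mu_1}(\wb_1) - D(\val_0) = g(\val_0) + \max_\val\{\inner{A\wb_1}{\val - \val_0} - g(\val) - \mu_1 d(\val)\}$. Linearising $g$ at $\val_0$ (convexity), using $\grad D(\val_0) = A\wb_1 - \grad g(\val_0)$, and using $\val_0 = \argmin d$ (so $-\mu_1 d(\val) \le -\mu_1 \Delta(\val, \val_0)$ on $Q_2$), the inner max is dominated by a concave program whose maximizer is exactly $\val_1 = V(\val_0, -\mu_1^{-1} \grad D(\val_0))$, yielding $J_{\mu_1}(\wb_1) \le D(\val_0) + \mu_1 \Delta(\val_0, \val_1)$. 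On the dual side, LCG at $\val_0$ together with the optimality of $\val_1$ and strong convexity of $d$ gives $D(\val_1) \ge D(\val_0) + \mu_1 \Delta(\val_0, \val_1) + (\mu_1 - L/\sigma)\Delta(\val_1, \val_0)$; the calibration $\mu_1 \sigma = L$ kills the last term, and combining with the primal bound yields $J_{\mu_1}(\wb_1) \le D(\val_1)$.

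For the inductive step, I would assume $J_{\mu_k}(\wb_k) \le D(\val_k)$ and mimic the base case, now using $\valhat$ as the pivot and the $(1-\tau_k, \tau_k)$ convex combinations. Abbreviating $\betab_k := \val_{\mu_k}(\wb_k)$: the LCG inequality at $\valhat$ combined with $\val_{k+1} - \valhat = \tau_k(\valtil - \betab_k)$ produces a lower bound on $D(\val_{k+1})$. First-order optimality of $\valtil = V(\betab_k, -\tfrac{\tau_k}{(1-\tau_k)\mu_k}\grad D(\valhat))$ contributes $\inner{\grad D(\valhat)}{\valtil - \betab_k} \ge \tfrac{(1-\tau_k)\mu_k}{\tau_k}\Delta(\valtil, \betab_k)$ via the three-point identity, and strong convexity of $d$ converts the LCG quadratic penalty into $\le \tfrac{L\tau_k^2}{\sigma}\Delta(\valtil, \betab_k)$; the two combine with a nonnegative coefficient exactly when $(1-\tau_k)\mu_k\sigma \ge L\tau_k^2$, which can be verified directly from the closed form $\mu_k = \tfrac{6L}{\sigma(k+1)(k+2)}$ together with $\tau_k = \tfrac{2}{k+3}$. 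Concavity of $D$ at $\valhat = (1-\tau_k)\val_k + \tau_k \betab_k$ then gives $D(\val_{k+1}) \ge (1-\tau_k) D(\val_k) + \tau_k D(\betab_k)$. On the primal side, the identity $\mu_{k+1} = (1-\tau_k)\mu_k$ yields the decomposition $\inner{A\wb_{k+1}}{\val} - g(\val) - \mu_{k+1} d(\val) = (1-\tau_k)[\inner{A\wb_k}{\val} - g(\val) - \mu_k d(\val)] + \tau_k[\inner{A\wb(\valhat)}{\val} - g(\val)]$, to which I would apply max over $\val$, linearise $g$ at $\valhat$, use convexity of $f$ on $\wb_{k+1} = (1-\tau_k)\wb_k + \tau_k \wb(\valhat)$, and invoke Fenchel--Young at $\wb(\valhat)$ (namely $f(\wb(\valhat)) + \inner{A\wb(\valhat)}{\valhat} = -\fstar(-A^\top \valhat)$) to get $J_{\mu_{k+1}}(\wb_{k+1}) \le (1-\tau_k) J_{\mu_k}(\wb_k) + \tau_k D(\betab_k)$. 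Combining with the inductive hypothesis closes the induction.

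The hardest part will be the primal upper bound: a naive max-splitting yields $(1-\tau_k) J_{\mu_k}(\wb_k) + \tau_k J(\wb(\valhat))$ on the right-hand side, which is too weak since $J \ge D$ pointwise. The remedy, exactly as in the base case, is to linearise $g$ (using only convexity, not LCG) at $\valhat$ so that Fenchel--Young on $f$ telescopes the residual terms into $D(\betab_k)$ rather than the less useful $J(\wb(\valhat))$. Equally delicate is the step-size calibration $(1-\tau_k)\mu_k\sigma \ge L\tau_k^2$: this is the binding constraint in the analysis, and it is what forces the $O(1/k^2)$ decay of $\mu_k$ and, through \eqref{eq:roc}, the overall $O(1/\sqrt{\epsilon})$ rate.
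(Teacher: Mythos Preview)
Your base case is essentially the paper's argument (modulo the intermediate expression you wrote, which should read $D(\val_0)+\inner{\grad D(\val_0)}{\val_1-\val_0}-\mu_1\Delta(\val_1,\val_0)$ rather than $D(\val_0)+\mu_1\Delta(\val_0,\val_1)$; both the primal chain and the LCG lower bound meet exactly at this quantity once $\mu_1\sigma=L$).

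The inductive step, however, has a real gap: the decoupled primal bound
\[
J_{\mu_{k+1}}(\wb_{k+1}) \le (1-\tau_k)\,J_{\mu_k}(\wb_k) + \tau_k\,D(\betab_k)
\]
is not obtainable from the tools you list, and in general fails. Linearising $g$ at $\valhat$ together with Fenchel--Young at $\wb(\valhat)$ yields exactly Lemma~\ref{lamma:nesterov:helper_alpha}, i.e.\ $T_2(\val)\le D(\valhat)+\inner{\grad D(\valhat)}{\val-\valhat}$; nothing there produces $D(\betab_k)$. Worse, evaluating the coupled expression $-\mu_{k+1}\Delta(\val,\betab_k)+\tau_k T_2(\val)$ at $\val=\betab_k$ already gives $\tau_k T_2(\betab_k)\ge \tau_k D(\betab_k)$ (Fenchel--Young inequality, since $\wb(\valhat)\ne\wb(\betab_k)$ in general), so the joint maximum over $\val$ is $\ge\tau_k D(\betab_k)$, not $\le$. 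Your dual lower bound $D(\val_{k+1})\ge(1-\tau_k)D(\val_k)+\tau_k D(\betab_k)$ is correct, but you have dropped precisely the terms $-\mu_{k+1}\Delta(\valtil,\betab_k)+\tau_k\inner{\grad D(\valhat)}{\valtil-\betab_k}$ that would be needed to absorb the primal slack.

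The paper does \emph{not} decouple. It keeps the max over $\val$ joint across the $(1-\tau_k)$ and $\tau_k$ pieces: the $(1-\tau_k)$ piece is bounded using the optimality condition of $\betab_k=\val_{\mu_k}(\wb_k)$ (not of $\valhat$) and convexity of $g$ to introduce $-\mu_k\Delta(\val,\betab_k)$; the induction hypothesis and concavity of $D$ at $\valhat$ then convert $J_{\mu_k}(\wb_k)$ into a linear-in-$\val$ term. Only after combining both pieces into $-\mu_{k+1}\Delta(\val,\betab_k)+D(\valhat)+\tau_k\inner{\grad D(\valhat)}{\val-\betab_k}$ is the max resolved, at $\valtil$, and only then is strong convexity plus the step-size condition $\mu_{k+1}\sigma\ge L\tau_k^2$ and finally LCG at $\valhat$ applied to reach $D(\val_{k+1})$. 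The coupling through a single $\val$ is the mechanism; your separate primal/dual bounds discard it.
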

\begin{proof}
See Appendix \ref{sec:app_proof_exc_cond_meet}.
\end{proof}

When stated in terms of the dual gap (as opposed to the duality gap) our
convergence results can be strengthened slightly.
\begin{corollary}[Rate of convergence for dual gap]
  \label{thm:rate_conv_dual_gap}
  The sequence $\cbr{\val_k}$ in Algorithm
  \ref{algo:nesterov05_bregman} satisfy
  \begin{align}
    \label{eq:convergence_dual_gap}
    \max_{\val \in Q_2} D(\val) - D(\val_k) \le \frac{6 \ L \, d(\val^*)}{\sigma
      (k+1) (k+2)} = \frac{6 \ d(\val^*)}{\sigma (k+1) (k+2)}
    \rbr{\frac{\nbr{A}_{1,2}^2}{\rho} + L_g},
  \end{align}
where $\val^* := \argmax_{\val \in Q_2} D(\val)$.  Note $d(\val^*)$ is tighter than the $D$ in \eqref{eq:convergence}.
\end{corollary}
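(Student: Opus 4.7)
The plan is to mimic the argument leading to \eqref{eq:roc}, but replace the crude bound $g^{\star}(A\wb) \le (g+\mu d)^{\star}(A\wb) + \mu D$ by a sharper one that isolates $d(\val^*)$ instead of the worst-case $D = \max_{\val\in Q_2} d(\val)$. In \eqref{eq:roc} the factor $D$ entered only because we upper-bounded $J(\wb_k)$ via $J_{\mu_k}(\wb_k) + \mu_k D$. For the dual gap we do not need to touch the primal side at all: we can extract a lower bound on $J_{\mu_k}(\wb_k)$ by dualizing the smoothed problem itself.

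Concretely, note that $J_{\mu_k}$ has exactly the same structure as $J$, with $g$ replaced by $g + \mu_k d$. Since $g + \mu_k d$ is convex and $\mu_k d \ge 0$, the same constraint qualification that yielded \eqref{eq:borlew} applies, giving
\begin{align*}
  \min_{\wb \in Q_1} J_{\mu_k}(\wb) = \max_{\val \in Q_2}\, \bigl\{ -(g + \mu_k d)(\val) - \fstar(-A^\top \val)\bigr\} = \max_{\val \in Q_2}\, \bigl\{ D(\val) - \mu_k\, d(\val)\bigr\}.
\end{align*}
In particular, specializing the right-hand side to $\val = \val^*$,
\begin{align*}
  J_{\mu_k}(\wb_k) \ge \min_{\wb \in Q_1} J_{\mu_k}(\wb) \ge D(\val^*) - \mu_k\, d(\val^*).
\end{align*}

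Combining this with the excessive gap condition \eqref{eq:excessive_gap_condition}, which Theorem~\ref{thm:excessive_condi_satisfy} guarantees for every $k$, gives
\begin{align*}
  D(\val^*) - \mu_k\, d(\val^*) \le J_{\mu_k}(\wb_k) \le D(\val_k),
\end{align*}
so $D(\val^*) - D(\val_k) \le \mu_k\, d(\val^*)$. Substituting $\mu_k = \tfrac{6L}{\sigma (k+1)(k+2)}$ from \eqref{eq:mu2-update} and expanding $L$ via \eqref{eq:lcg-of-D} yields the claimed bound.

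The only substantive step is the duality identity for $J_{\mu_k}$; everything else is a direct rerun of the argument in \eqref{eq:roc}. This is not really an obstacle because $\mu_k d$ is a bounded, smooth, convex perturbation on a compact set $Q_2$, so no new qualification is needed beyond the one already invoked to derive \eqref{eq:borlew}. The gain over \eqref{eq:convergence} is exactly the improvement from $D = \max_{\val} d(\val)$ to $d(\val^*)$, which is strictly tighter whenever $\val^*$ does not lie on the $d$-maximizing boundary of $Q_2$.
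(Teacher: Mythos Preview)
Your proof is correct and follows essentially the same route as the paper: both lower-bound $J_{\mu_k}(\wb_k)$ by $D(\val^*) - \mu_k\, d(\val^*)$ and then invoke the excessive gap condition. The only cosmetic difference is that the paper carries out the bound by a direct chain of inequalities (plug $\val^*$ into the inner $\max$, then minimize over $\wb$) rather than appealing to Fenchel duality for $J_{\mu_k}$ as a packaged result; in particular only weak duality is needed, so your constraint-qualification remark is not even required.
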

\begin{proof}
  See Appendix \ref{sec:app_dual_gap}.
\end{proof}

\section{Training Max-Margin Markov Networks}
\label{sec:TrainMaximMarg}

In the max-margin Markov network (\mcn) setting \cite{TasGueKol04},
we are given $n$ labeled data points $\cbr{\xb^{i}, \yb^{i}}_{i=1}^n$,
where $\xb^{i}$ are drawn from some space $\Xcal$ and $\yb^{i}$ belong
to some space $\Ycal$. We assume that there is a feature map $\phivec$
which maps $(\xb, \yb)$ to a feature vector in $\RR^{p}$. Furthermore,
for each $\xb^{i}$, there is a label loss $\ell^{i}_{\yb} := \ell(\yb,
\yb^{i}; \xb^{i})$ which quantifies the loss of predicting label $\yb$
when the correct label is $\yb^{i}$. Given this setup, the objective
function minimized by \mcn s can be written as
\begin{align}
  \label{eq:m3n_primal_obj}
  J(\wb) = \frac{\lambda}{2} \nbr{\wb}^{2} + \frac{1}{n} \sum_{i=1}^{n}
  \max_{\yb \in \Ycal} \cbr{ \ell^{i}_{\yb} -
    \inner{\wb}{\psivec^{i}_{\yb}} },
\end{align}
where we used the shorthand $\psivec^{i}_{\yb} := \phivec(\xb^{i},
\yb^{i}) - \phivec(\xb^{i}, \yb)$. To write \eqref{eq:m3n_primal_obj} in
the form of \eqref{eq:primal}, we define $Q_{1} = \RR^{p}$, $A$ to be a
$(n \abr{\Ycal})$-by-$p$ matrix whose $(i, \yb)$-th row is
$(-\psivec^{i}_{\yb})^{\top}$,
\begin{align*}
  f(\wb) = \frac{\lambda}{2} \nbr{\wb}^{2}_{2}, \quad \text{and} \quad
  \gstar(\ub) = \frac{1}{n} \sum_i \max_{\yb} \cbr{\ell^{i}_{\yb} +
    u^i_{\yb}}.
\end{align*}
Now, $g$ can be verified to be:
\begin{align}
  \label{eq:m3n-gdef}
  g(\val) = 
  \begin{cases}
    -\sum_i \sum_{\yb} \ell^{i}_{\yb} \alpha^{i}_{\yb} & \text{if }
    \alpha^{i}_{\yb} \ge 0, \text{ and } \sum_{\yb} \alpha^{i}_{\yb} =
    \frac{1}{n}, \ \forall\ i\\
    +\infty & \text{otherwise}.
  \end{cases}
\end{align}
The domain of $g$ is $Q_2 = \Scal^n := \cbr{\val \in [0, 1]^{n
    \abr{\Ycal}} : \sum_{\yb} \alpha^{i}_{\yb} = \frac{1}{n}, \ \forall\
  i}$, which is convex and compact. Using the $L_{2}$ norm on $Q_{1}$
(\ie, $\nbr{\wvec} = (\sum_i w_i^2)^{1/2}$), $f$ is clearly $\lambda$-strongly convex.  Similarly, if we use the $L_{1}$ norm on $Q_{2}$ (\ie,
$\nbr{\val} = \sum_{i} \sum_{\yvec} \abr{\alpha^i_{\yvec}}$), then $g$ is
0-\lcg.  By noting that $\fstar(-A^{\top} \val) = \frac{1}{2 \lambda}
\val^{\top} A A^{\top} \val$, one can write the dual form $D(\val) :
\Scal^n \mapsto \RR$ of $J(\wb)$ as
\begin{align}
  \label{eq:dval-m3n}
  D(\val) = -g(\val) - \fstar(-A^{\top} \val) = -\frac{1}{2\lambda}
  \val^{\top} A A^{\top} \val + \sum_i \sum_{\yb} \ell^{i}_{\yb}
  \alpha^{i}_{\yb},\qquad \val \in \Scal^{n}.
\end{align}

\subsection{Rates of Convergence}
\label{sec:RatesConvergence}

A natural prox-function to use in our setting is the relative entropy
with respect to the uniform distribution, which is defined as:
\begin{align}
  \label{eq:m3n-proxdef}
  d(\val) = \sum_{i=1}^{n} \sum_{\yb} \alpha^{i}_{\yb} \log \alpha^{i}_{\yb} +
  \log n + \log \abr{\Ycal},
\end{align}
The relative entropy is 1-strongly convex in
$\Scal^{n}$ with respect to the $L_{1}$ norm \cite[e.g.,][Proposition 5.1]{BecTeb03}. Furthermore, $d(\val) \leq D = \log
|\Ycal|$ for $\val \in \Scal^{n}$, and the norm of
$A$ can be computed via
\begin{align*}
  \nbr{A}_{1,2} = \max_{\wb \in \RR^p, \ub \in \RR^{n \abr{\Ycal}} }
  \cbr{\inner{A \wb}{\ub} : \sum_{i=1}^p w_i^2 = 1, \sum_{i=1}^{n}
    \sum_{\yb \in \Ycal} \abr{u^i_{\yb}} = 1} = \max_{i,\yb}
  \nbr{\psivec^{i}_{\yb}},
\end{align*}
where $\nbr{\psivec^{i}_{\yb}}$ is the Euclidean norm of $\psivec^{i}_{\yb}$.  Since $f$ is $\lambda$-strongly convex 
and $L_g = 0$, plugging this expression of $\nbr{A}_{1,2}$ into
\eqref{eq:convergence} and \eqref{eq:convergence_dual_gap}, we obtain
the following rates of convergence for our algorithm:
\begin{align*}
  J(\wb_{k}) - D(\val_{k}) \le \frac{6 \log \abr{\Ycal}}{(k+1) (k+2)}
  \frac{\max_{i,\yb}\nbr{\psivec^{i}_{\yb}}^2}{\lambda} \text{ and }
  \max_{\val \in Q_2} D(\val) - D(\val_{k}) \le \frac{6
    \text{KL}(\val^*||\val_0)}{(k+1) (k+2)}
  \frac{\max_{i,\yb}\nbr{\psivec^{i}_{\yb}}^2}{\lambda},
\end{align*}
where $\text{KL}(\val^*||\val_0)$ denotes the KL divergence between
$\val^{*}$ and the uniform distribution $\val_{0}$. Recall that for
distributions $\pb$ and $\qb$ the KL divergence is defined as
KL$(\pb||\qb) = \sum_i p_i \ln \frac{p_i}{q_i}$.

Therefore to reduce the
duality gap and dual gap below $\epsilon$, it suffices to take the
following number of steps respectively:
\begin{align}
  \label{eq:rate_conv_mcn}
  \text{Duality gap: } 2+\max_{i,\yb}\nbr{\psivec^{i}_{\yb}} \sqrt{\frac{6
      \log \abr{\Ycal}} {\lambda \epsilon}} \qquad \quad \text{Dual gap:
  } 2+\max_{i,\yb}\nbr{\psivec^{i}_{\yb}} \sqrt{\frac{6
      \text{KL}(\val^*||\val_0)} {\lambda \epsilon}}.
\end{align}

\subsection{Computing the Approximation $J_{\mu}(\wb)$ and Connection to CRFs}
\label{sec:CompAppr}

In this section we show how to compute $J_{\mu}(\wb)$. Towards this
end, we first compute $(g + \mu d)^{\star} (\ub)$.
\begin{lemma}
  \label{lem:dual-g-mud}
  The Fenchel dual of $(g + \mu d)$ is given by
  \begin{align}
    \label{eq:dual-g-mud}
    (g + \mu d)^{\star} (\ub) = \frac{\mu}{n} \sum_{i=1}^{n} \log
    \sum_{\yb \in \Ycal} \exp \rbr{\frac{u^{i}_{\yb} +
        \ell^{i}_{\yb}}{\mu}} - \mu \log \abr{\Ycal},
  \end{align}
  and the $(i, \yb)$-th element of its gradient can be written as
\begin{align}
  \label{eq:dual-g-mud-grad}
  \rbr{\grad (g + \mu d)^{\star} (\ub)}^{i}_{\yb} = \frac{1}{n}
  \exp \rbr{\frac{u^i_{\yb} + \ell^{i}_{\yb}}{\mu}} \left/ \sum_{\yb'}
    \exp \rbr{\frac{u^{i}_{\yb'} + \ell^{i}_{\yb'}}{\mu}}\right..
\end{align}
\end{lemma}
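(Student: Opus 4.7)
The plan is to compute the Fenchel dual directly from its definition, exploit the product structure of $\Scal^{n}$ and the additive separability of both $g$ and $d$ across $i$, reduce to $n$ independent entropy-regularized linear programs on the scaled simplex, solve each by a Lagrange-multiplier calculation, and finally read off the gradient by either direct differentiation or by invoking Danskin's envelope theorem (since the maximizer is unique).

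First, I unfold the definition using \eqref{eq:m3n-gdef} and \eqref{eq:m3n-proxdef}:
\begin{align*}
(g+\mu d)^{\star}(\ub) = \sup_{\val \in \Scal^n} \sum_{i=1}^n \sum_{\yb \in \Ycal} \Bigl[(u^{i}_{\yb}+\ell^{i}_{\yb})\alpha^{i}_{\yb} - \mu\,\alpha^{i}_{\yb}\log\alpha^{i}_{\yb}\Bigr] - \mu\log n - \mu \log\abr{\Ycal}.
\end{align*}
Because the constraints defining $\Scal^n$ only couple coordinates sharing the same $i$, the supremum splits into $n$ independent subproblems, each one maximizing $\sum_{\yb}(u^{i}_{\yb}+\ell^{i}_{\yb})\alpha^{i}_{\yb} - \mu \sum_{\yb}\alpha^{i}_{\yb}\log\alpha^{i}_{\yb}$ over the scaled simplex $\{\alpha^{i} \ge 0 : \sum_{\yb}\alpha^{i}_{\yb}=1/n\}$.

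Next, for each fixed $i$ I would attach a Lagrange multiplier $\lambda_i$ to the equality constraint and differentiate in $\alpha^{i}_{\yb}$. The strict concavity of the objective (the entropy term is strictly concave and finite on the open simplex, and goes to $-\infty$ at the boundary) guarantees that the KKT point is the unique interior maximizer, so inequality constraints are inactive. Solving the stationarity equation gives $\alpha^{i}_{\yb} \propto \exp((u^{i}_{\yb}+\ell^{i}_{\yb})/\mu)$, and the normalization $\sum_{\yb}\alpha^{i}_{\yb}=1/n$ fixes the proportionality constant. This immediately yields the formula \eqref{eq:dual-g-mud-grad} as the unique optimizer $\val_{\mu}(A\wb)$ when $\ub=A\wb$, and in general as the argmax in the definition of $(g+\mu d)^{\star}$.

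Substituting the optimizer back into the objective is the step that requires care: using $\log\alpha^{i}_{\yb} = (u^{i}_{\yb}+\ell^{i}_{\yb})/\mu - \log(n\,Z_i)$ where $Z_i := \sum_{\yb'}\exp((u^{i}_{\yb'}+\ell^{i}_{\yb'})/\mu)$, each summand $(u^{i}_{\yb}+\ell^{i}_{\yb})\alpha^{i}_{\yb}-\mu\alpha^{i}_{\yb}\log\alpha^{i}_{\yb}$ collapses to $\mu\alpha^{i}_{\yb}\log(n Z_i)$; summing over $\yb$ uses $\sum_{\yb}\alpha^{i}_{\yb}=1/n$ to give $(\mu/n)\log(n Z_i)$. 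Summing over $i$ and combining with the constants $-\mu\log n -\mu\log\abr{\Ycal}$, the $\mu\log n$ terms cancel exactly, leaving the claimed expression \eqref{eq:dual-g-mud}.

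Finally, for the gradient formula \eqref{eq:dual-g-mud-grad}, the cleanest route is to invoke Danskin's theorem: since the maximizer above is unique, $(g+\mu d)^{\star}$ is differentiable and its gradient equals the argmax, which is precisely the expression displayed. Alternatively one can simply differentiate the closed form of \eqref{eq:dual-g-mud} coordinate-wise and observe the log-sum-exp derivative. The main obstacle is purely bookkeeping, specifically the cancellation of the $\mu\log n$ term; no deep argument is required beyond checking that cancellation carefully.
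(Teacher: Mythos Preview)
Your proposal is correct and follows essentially the same approach as the paper: unfold the definition via \eqref{eq:m3n-gdef} and \eqref{eq:m3n-proxdef}, introduce Lagrange multipliers for the simplex constraints, solve the stationarity conditions to obtain the softmax form of the optimizer, and substitute back to recover \eqref{eq:dual-g-mud}. You are somewhat more careful than the paper in justifying why the maximizer lies in the interior (strict concavity of the entropy) and in deriving the gradient \eqref{eq:dual-g-mud-grad} via Danskin's theorem, whereas the paper's proof only works out \eqref{eq:dual-g-mud} explicitly and leaves the gradient implicit.
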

\begin{proof}
See Supplementary Material \ref{sec:proof_lem:dual-g-mud}.
\end{proof}

Using the above lemma, plugging in the definition of $A$ and
$\psivec^{i}_{\yb}$, and assuming that $\ell^{i}_{\yb^{i}} = 0$, we get
\begin{align}
  \label{eq:smooth-primal}
  J_{\mu}(\wb) = f(\wb) + (g+\mu d)^{\star}(A\wb) &= \frac{\lambda}{2}
  \nbr{\wb}^{2}_{2} - \frac{\mu}{n} \sum_{i=1}^{n} \log p(\yb^{i} |
  \xb^{i}; \wb) - \mu \log \abr{\Ycal}, \\
  \text{where} \qquad \qquad p(\yb | \xb^{i}; \wb) &\propto \exp \rbr{\frac{\ell^{i}_{\yb} +
      \inner{\wb}{\phivec(\xb^{i}, \yb)}}{\mu}} \nonumber.
\end{align}
This interpretation clearly shows that the approximation
$J_{\mu}(\wb)$ essentially converts the maximum margin estimation
problem \eqref{eq:m3n-objective} into a CRF estimation problem
\eqref{eq:crf-objective}. Here $\mu$ determines the quality of the
approximation; when $\mu \to 0$, $p(\yb | \xb^{i}; \wb)$ tends to the
delta distribution with the probability mass concentrated on
$\argmax_{\yvec} \ell^i_{\yvec} + \inner{\wb}{\phivec(\xvec^i,
  \yvec)}$.  Besides, the loss $\ell^i_{\yvec}$ rescales the
distribution.

Given the above interpretation, it is tempting to argue that every
non-smooth problem can be solved by computing a smooth approximation
$J_{\mu}(\wb)$, and applying a standard smooth convex optimizer to
minimize $J_{\mu}(\wb)$. Unfortunately, this approach is fraught with
problems. In order to get a close enough approximation of $J(\wb)$ the
$\mu$ needs to be set to a very small number which makes
$J_{\mu}(\wb)$ ill-conditioned and leads to numerical issues in
the optimizer. The excessive gap technique adaptively changes
the $\mu$ in each iteration in order to avoid these problems.

\section{Efficient Implementation by Exploiting Clique Decomposition}
\label{sec:EfficImplExpl}

In the structured large margin setting, the number of labels $|\Ycal|$
could potentially be exponentially large.  For example, if a sequence
has $l$ nodes and each node has two states, then $\abr{\Ycal} = 2^l$.
A naive implementation of the excessive gap reduction algorithm
described in the previous section requires maintaining and updating
$O(|\Ycal|)$ coefficients at every iteration, which is prohibitively
expensive.  With a view to reducing the computational complexity, and
also to take into account the inherent conditional independence properties
of the output space, it is customary to assume that $\Ycal$ is endowed
with a graphical model structure; we refer the reader to
\cite{BakHofSchSmoetal07} for an in-depth treatment of this issue.
For our purposes it suffices to assume that $\ell(\yb, \yb^{i};
\xb^{i})$ and $\phivec(\xb^{i}, \yb)$ decompose according to the
cliques\footnote{Any fully connected subgraph of a graph is called a clique.} of an undirected graphical model, and hence can be written
(with some abuse of notation) as
\begin{align}
  \label{eq:decomp_m3n}
  \ell^{i}_{\yb} = \ell(\yb,\yb^{i}; \xb^{i}) = \sum_{c \in \Ccal}
  \ell(y_{c}, y_{c}^{i}; \xb^{i}) = \sum_{c \in \Ccal}
  \ell^{i}_{y_{c}}, \quad \phivec(\xb^{i}, \yb) = \mathop{\oplus}\limits_{c
    \in \Ccal} \phivec(\xb^{i}, y_{c}), \text{ and } \psivec^{i}_{\yb} =
  \mathop{\oplus}\limits_{c \in \Ccal} \psivec^{i}_{y_{c}}.
\end{align}
Here $\Ccal$ denotes the set of all cliques of the graphical model and
$\oplus$ denotes vector concatenation. More explicitly,
$\psivec^{i}_{\yb}$ is the vector on the graphical model obtained by
accumulating the vector $\psivec^{i}_{y_{c}}$ on all the cliques $c$ of the graph.

Let $h_{c}(y_c)$ be an arbitrary real valued function on the value of
$\yvec$ restricted to clique $c$.  Graphical models define a
distribution $p (\yb)$ on $\yvec \in \Ycal$ whose density takes the
following factorized form:
\begin{align}
  \label{eq:factorized-def}
  p (\yb) \propto q(\yvec) =  \prod_{c \in \Ccal} \exp\rbr{h_c(y_c)}.
\end{align}
The key advantage of a graphical model is that the marginals on the
cliques can be efficiently computed:
\begin{align*}
  m_{y_c} := \sum_{\zvec: \zvec|_c = y_c} q(\zvec) = \sum_{\zvec: \zvec|_c = y_c} \prod_{c' \in \Ccal} \exp\rbr{h_{c'}(z_{c'})}.
\end{align*}
where the summation is over all the configurations $\zvec$ in $\Ycal$
whose restriction on the clique $c$ equals $y_c$.  Although $\Ycal$ can
be exponentially large, efficient dynamic programming algorithms exist
that exploit the factorized form \eqref{eq:factorized-def}, \eg\ belief
propagation \cite{Lauritzen96}.  The computational cost is
$O(s^\omega)$ where $s$ is the number of states of each node, and
$\omega$ is the maximum size of the cliques.  For example, a linear chain has $\omega = 2$.  When $\omega$ is large,
approximate algorithms also exist \cite{WaiJor08, AndFreDouJor03, KscFreLoe01}.
In the sequel we will assume that our graphical models are tractable, \ie, $\omega$ is low.

\subsection{Basics}
\label{sec:Basics}

At each iteration of Algorithm \ref{algo:nesterov05_bregman}, we need
to compute four quantities: $\wb(\val)$, $\grad D(\val)$,
$\val_{\mu}(\wb)$, and $V(\val, \gvec)$.  Below we rewrite them by
taking into account the factorization \eqref{eq:decomp_m3n}, and
postpone to Section \ref{sec:EfficientComputation} the discussion on
how to compute them efficiently.  Since $\alpha^{i}_{\yb} \geq 0$ and
$\sum_{\yb} \alpha^{i}_{\yb} = \frac{1}{n}$, the
$\cbr{\alpha^{i}_{\yb} : \yb \in \Ycal}$ form an unnormalized
distribution, and we denote its (unnormalized) marginal distribution
on clique $c$ by
\begin{align}
\label{eq:marginals_alpha}
  \alpha^{i}_{y_{c}} := \sum\nolimits_{\zvec: \zvec |_c = y_c} \alpha^i_{\zvec}.
\end{align}
The feature expectations on the cliques with respect to the
unnormalized distributions $\val$ are important:
\begin{align}
  \label{eq:unnormalized_dist}
  \FF\sbr{\psivec^{i}_{y_{c}};\val} := \sum_{y_{c}} \alpha^{i}_{y_{c}}
  \psivec^{i}_{y_{c}}, \quad \text{and} \quad \FF[\psivec_c; \val] :=
  \sum_i \FF\sbr{\psivec^{i}_{y_{c}};\val}.
\end{align}
Clearly, if for all $i$ the marginals of $\val$ on the cliques (\ie, $\cbr{\alpha^{i}_{y_{c}} : i, c, y_c}$ in \eqref{eq:marginals_alpha}) are available, then these two expectations can be computed efficiently.
\begin{itemize}
\item \textbf{$\wb(\val)$:} As a consequence of \eqref{eq:decomp_m3n} we
  can write $\psivec^{i}_{\yb} = \mathop{\oplus}\limits_{c \in \Ccal}
  \psivec^{i}_{y_{c}}$. Plugging this into \eqref{eq:alpha_to_w} and
  recalling that $\grad \fstar(-A^{\top} \val) = \frac{-1}{\lambda}
  A^{\top} \val$ yields the following expression for $\wb(\val) =
  \frac{-1}{\lambda} A^{\top} \val$:
  \begin{eqnarray}
    \label{eq:wb_val}
    \wb(\val) \! = \! \frac{1}{\lambda} \sum_i \sum_{\yb} \alpha^{i}_{\yb}
    \psivec^{i}_{\yb} \! = \! \frac{1}{\lambda} \sum_{i} \sum_{\yb} \alpha^{i}_{\yb} \! \rbr{\mathop{\oplus} \limits_{c \in \Ccal} \psivec^{i}_{y_{c}}} \! = \! \frac{1}{\lambda}
    \mathop{\oplus}\limits_{c \in \Ccal} \! \rbr{\sum_i
      \FF\sbr{\psivec^{i}_{y_{c}};\val}} \! = \! \frac{1}{\lambda} \mathop{\oplus}\limits_{c \in
      \Ccal} \FF[\psivec_c; \val].
  \end{eqnarray}
\item \textbf{$\grad D(\val)$:} Using \eqref{eq:dval-m3n} and the
  definition of $\wb(\val)$, the $(i,\yb)$-th element of $\grad D(\val)$ can be written as
  \begin{align}
    \label{eq:grad_D_alpha}
    \rbr{\grad D(\val)}^{i}_{\yb} &= \ell^{i}_{\yb} -
    \frac{1}{\lambda} \rbr{AA^{\top} \val}^{i}_{\yb} = \ell^{i}_{\yb} -
    \inner{\psivec^{i}_{\yb}}{\wb(\val)} = \sum_{c} \rbr{ \ell^{i}_{y_{c}} - \frac{1}{\lambda} \inner{\psivec^{i}_{y_{c}}}{\FF[\psivec_c; \val]}}.
  \end{align}
\item \textbf{$\val_{\mu}(\wb)$:} Using \eqref{eq:w_to_alpha} and
  \eqref{eq:dual-g-mud-grad}, the $(i,\yb)$-th element of
  $\val_{\mu}(\wb)$ given by $\rbr{\grad (g + \mu d)^{\star}
    (A\wb)}^{i}_{\yb}$ can be written as
  \begin{align}
    \label{eq:alpha_mu_k_wb_k}
    \rbr{\val_{\mu}(\wb)}^{i}_{\yb} = \frac{1}{n} \frac{\exp
      \rbr{\mu^{-1}\rbr{\ell^{i}_{\yb}-\inner{\psivec^{i}_{\yb}}{\wb}}}}{
      \sum_{\yb'} \exp \rbr{\mu^{-1}\rbr{\ell^{i}_{\yb'} -
          \inner{\psivec^{i}_{\yb'}}{\wb}}}} = \frac{1}{n}
    \frac{\prod_{c} \exp
      \rbr{\mu^{-1}\rbr{\ell^{i}_{y_{c}}-\inner{\psivec^{i}_{y_{c}}}{\wb_c}}}}{\sum_{\yb'}
      \prod_{c} \exp
      \rbr{\mu^{-1}\rbr{\ell^{i}_{y'_{c}}-\inner{\psivec^{i}_{y'_{c}}}{\wb_{c}}}}}.
  \end{align}
  
  \vspace{-1.3em}
\item \textbf{$V(\val, \gvec)$:} Since the prox-function $d$ is the relative entropy, the $(i,\yb)$-th element of $V(\val, \gvec)$ is
  \begin{align}
    \label{eq:V_alpha_g}
    \rbr{V(\val, \gvec)}^{i}_{\yb} = \frac{1}{n} \frac{\alpha^{i}_{\yb}
      \exp(-g^{i}_{\yb})}{\sum_{\yb'} \alpha^{i}_{\yb'}
      \exp(-g^{i}_{\yb'})}.
  \end{align}
\end{itemize}

\subsection{Efficient Computation}
\label{sec:EfficientComputation}

We now show how the algorithm can be made efficient by taking into
account \eqref{eq:decomp_m3n}. Key to our efficient implementation are
the following four observations from Algorithm
\ref{algo:nesterov05_bregman} when applied to the structured large
margin setting.  In particular, we will exploit the fact that the
marginals of $\val_k$ can be updated iteratively.
\begin{itemize}
\item \textbf{The marginals of $\val_{\mu_k}(\wb_k)$ and $\valhat$ can be computed efficiently}. From \eqref{eq:alpha_mu_k_wb_k} it is easy to see that $\val_{\mu_k}(\wb_k)$ can be written as a product of factors over cliques, that is, in the form of \eqref{eq:factorized-def}. Therefore, the marginals of $\val_{\mu_k}(\wb_k)$ can be computed efficiently.  As a result, if we keep track of the marginal distributions of $\val_k$, then it is trivial to compute the marginals of $\valhat = (1-\tau_{k}) \val_{k} + \tau_{k} \val_{\mu_{k}}(\wb_{k})$.
\item \textbf{The marginals of $\valtil$ can be computed efficiently}. Define $\eta = \frac{-\tau_{k}}{(1-\tau_{k})\mu_{k}}$. By plugging in \eqref{eq:grad_D_alpha} and \eqref{eq:alpha_mu_k_wb_k} into
  \eqref{eq:V_alpha_g} and observing that $\grad D(\val)$ can be written as a sum of terms over cliques obtains:
  \begin{align}
    \nonumber \valtil^{i}_{\yb} & =
    \rbr{V\rbr{\val_{\mu_{k}}(\wb_{k}), \eta \grad
        D(\valhat)}}^{i}_{\yb} \propto \rbr{\val_{\mu_{k}}(\wb_{k})}^{i}_{\yb}
    \exp\rbr{-\eta \rbr{\grad D(\valhat)}^{i}_{\yb}}
    \\
    \label{eq:valtil-factor}
    & = \prod_{c} \exp
    \rbr{\mu_{k}^{-1}\rbr{\ell^{i}_{y_{c}}-\inner{\psivec^{i}_{y_{c}}}{(\wb_{k})_{c}}}
      - \eta \ell^{i}_{y_{c}} + \eta \lambda^{-1}
      \inner{\psivec^{i}_{y_{c}}}{\FF[\psivec_c; \valhat]}}.
  \end{align}
  Clearly, $\valtil$ factorizes and has the form of \eqref{eq:factorized-def}.  Hence its marginals can be computed efficiently.
\item  \textbf{The marginals of $\val_k$ can be updated efficiently}.
  Given the marginals of $\valtil$, it is trivial to update the marginals of $\val_{k+1}$ since $\val_{k+1} = (1-\tau_{k}) \val_{k} + \tau_{k} \valtil$.  For convenience, define $\val_c := \cbr{\alpha^i_{y_c}: i, y_c}$.
\item \textbf{$\wb_{k}$ can be updated efficiently}.  According to step 5 of Algorithm \ref{algo:nesterov05_bregman}, by using \eqref{eq:wb_val} we have
    \begin{align*}
      (\wb_{k+1})_c = (1-\tau_{k}) (\wb_{k})_c + \tau_{k} (\wb(\valhat))_c = (1-\tau_k) (\wb_{k})_c + \tau_k \lambda^{-1} \FF[\psivec_c; \valhat].
    \end{align*}
\end{itemize}

Leveraging these observations, Algorithm
\ref{algo:nesterov05_bregman_struct} provides a complete listing of how
to implement the excessive gap technique with Bregman projections for
training \mcn. It focuses on clarifying the ideas; a practical
implementation can be sped up in many ways.  The last issue to be
addressed is the computation of the primal and dual objectives
$J(\wb_k)$ and $D(\val_k)$, so as to monitor the duality gap. See
Appendix \ref{sec:primal_dual_eval} for details.

\begin{algorithm}[t]
  \caption{Max-margin structured learning using clique factorization}
  \label{algo:nesterov05_bregman_struct}

  \KwIn{Loss functions $\cbr{\ell^i_\yb}$ and features
    $\cbr{\psivec^i_\yb}$, a regularization parameter $\lambda$, a tolerance level $\epsilon > 0$.}
  \KwOut{A pair $\wb$ and $\val$ that satisfy $J(\wb) - D(\val) < \epsilon$.}

  Initialize: $k  \leftarrow 1$, $\mu_{1}  \leftarrow  \frac{1}{\lambda} \max_{i, \yvec} \nbr{\psivec^i_{\yvec}}^2$, $\val_0  \leftarrow \rbr {\frac{1}{n|\Ycal|},\hdots, \frac{1} {n|\Ycal|}}^{\top} \in \RR^{n|\Ycal|}$.\;
  Update $\wb_1  \leftarrow  \wb(\val_{0}) = \frac{1}{\lambda} \mathop{\oplus}\nolimits_{c \in \Ccal} \FF[\psivec_c; \val_0]$, $\val_{1}   \leftarrow  V  \rbr{\val_{0}, -\frac{1}{\mu_{1}} \grad D(\val_{0})}$ and compute its marginals.\;

  \While($\qquad$/* Termination criteria: duality gap falls below $\epsilon$ */){$J(\wb_k) - D(\val_k) \ge \epsilon$} {
    $\tau_{k} \leftarrow \frac{2}{k+3}$. \;
    Compute the marginals of $\val_{\mu_k}(\wb_k)$ by exploiting
    \eqref{eq:alpha_mu_k_wb_k}. \;
    \ForAll{\emph{cliques} $c \in \Ccal$}{
      Compute the marginals $\valhat_c$ by convex combination: $\valhat_c \leftarrow (1-\tau_k) (\val_k)_c + \tau_k (\val_{\mu_k}(\wb_k))_c$.\;
      Update the weight on clique $c$: $\rbr{\wb_{k+1}}_{c} \leftarrow \rbr{1-\tau_{k}}\rbr{\wb_{k}}_{c} + \frac{\tau_{k}}{\lambda} \sum_i \FF\sbr{\psivec^{i}_{y_{c}}; \valhat_c} $. \; 
    }
    Compute the marginals of $\valtil$ by exploiting    \eqref{eq:valtil-factor} and using the marginals $\cbr{\valhat_c}$. \;
      \ForAll{\emph{cliques} $c \in \Ccal$}{
        Update the marginals $(\val_k)_c$ by convex combination: $(\val_{k+1})_c \leftarrow (1-\tau_k) (\val_k)_c + \tau_k \valtil_c$.
      }
      Update 
      $\mu_{k+1} \leftarrow (1 - \tau_{k}) \mu_{k}$, $k \leftarrow k + 1$.\;
  }
  \KwRet{$\wb_k$ and $\val_k$.}\;
\end{algorithm}

\subsection{Kernelization}

When nonlinear kernels are used, the feature vectors
$\phivec^i_{\yvec}$ are not expressed explicitly and only their inner
products can be evaluated via kernels on the cliques:
\begin{align*}
\inner{\psivec^i_{\yvec}}{\psivec^j_{\yvec'}} \! := k((\xvec^i,
\yvec),(\xvec^j, \yvec')) \! = \! \sum_c k_c((\xvec^i, y_c), (\xvec^j,
y'_c)), \ \ \text{where } \ k_c((\xvec^i, y_c), (\xvec^j, y'_c)) \! :=
\! \inner{ \psivec^i_{y_c}}{\psivec^j_{y'_c}}.
\end{align*}
Algorithm \ref{algo:nesterov05_bregman_struct} is no longer applicable
because no explicit expression of $\wb$ is available.  However, by
rewriting $\wb_k$ as the feature expectations with respect to some
underlying distribution which can be updated implicitly, all the updates
and objective function evaluations can still be done efficiently.
Details are in Appendix \ref{sec:app_kernel}.

\subsection{Efficiency in Memory and Computation}

For concreteness, let us consider a sequence as an example. Here the
cliques are just edges between consecutive nodes.  Suppose there are
$l+1$ nodes and each node has $s$ states.  The memory cost of
Algorithm \ref{algo:nesterov05_bregman_struct} is $O(nls^2)$, due to
the storage of the marginals.  The computational cost per iteration is
dominated by calculating the marginals of $\valhat$ and $\valtil$, which is
$O(nls^2)$ by standard graphical model inference.  The rest operations
in Algorithm \ref{algo:nesterov05_bregman_struct} cost $O(nls^2)$ for
linear kernels.  If nonlinear kernels are used, then the cost becomes
$O(n^2ls^2)$ (see Appendix \ref{sec:app_kernel}).

\section{Discussion}
\label{sec:Discussion}

Structured output prediction is an important learning task in both
theory and practice.  The main contribution of our paper is two fold.
First, we identified an efficient algorithm by \cite{Nesterov05a} for
solving the optimization problems in structured prediction.  We proved
the $O(1/\sqrt{\epsilon})$ rate of convergence for the Bregman
projection based updates in excessive gap optimization, while
\cite{Nesterov05a} showed this rate only for projected gradient style
updates. In \mcn\ optimization, Bregman projection plays a key role in
factorizing the computations, while technically such factorizations
are not applicable to projected gradient.  Second, we designed a
nontrivial application of the excessive gap technique to
\mcn\ optimization, in which the computations are kept efficient by
using the graphical model decomposition.  Kernelized objectives can
also be handled by our method, and we proved superior convergence and
computational guarantees than existing algorithms.


When \mcn s are trained in a batch fashion, we can compare the
convergence rate of dual gap between our algorithm and the
exponentiated gradient method \cite[\expgrad,][]{ColGloKooetal08}.
Assume $\val_0$, the initial value of $\val$, is the uniform
distribution and $\val^*$ is the optimal dual solution.  Then by
\eqref{eq:rate_conv_mcn}, we have
\begin{align*}
  \text{Ours:} \quad \max_{i,\yb}\nbr{\psivec^{i}_{\yb}} \sqrt{\frac{6
      \text{KL}(\val^* || \val_{0})} {\lambda \epsilon}}, \qquad
  \qquad \text{\expgrad:} \quad \max_{i,\yb}\nbr{\psivec^{i}_{\yb}}^2
  \frac{\text{KL}(\val^* || \val_{0})}{\lambda \epsilon}.
\end{align*}
It is clear that our iteration bound is almost the square root of
\expgrad, and has much better dependence on $\epsilon$, $\lambda$,
$\max_{i,\yb}\nbr{\psivec^{i}_{\yb}}$, as well as the divergence from
the initial guess to the optimal solution $\text{KL}(\val^* ||
\val_{0})$.

In addition, the cost per iteration of our algorithm is almost the
same as \expgrad, and both are governed by the computation of the
expected feature values on the cliques (which we call exp-oracle), or
equivalently the marginal distributions.  For graphical models, exact
inference algorithms such as belief propagation can compute the
marginals via dynamic programming \cite{Lauritzen96}.  Finally,
although both algorithms require marginalization, they are calculated
in very different ways.  In \expgrad, the dual variables $\val$
correspond to a factorized distribution, and in each iteration its
potential functions on the cliques are updated using the exponentiated
gradient rule.  In contrast, our algorithm explicitly updates the
marginal distributions of $\val_k$ on the cliques, and marginalization
inference is needed only for $\valhat$ and $\valtil$.  Indeed, the
joint distribution $\val$ does \emph{not} factorize, which can be seen
from step 7 of Algorithm \ref{algo:nesterov05_bregman}: the convex
combination of two factorized distributions is not necessarily
factorized.

Marginalization is just one type of query that can be answered
efficiently by graphical models, and another important query is the
max a-posteriori inference (which we call max-oracle): given the
current model $\wb$, find the $\argmax$ in \eqref{eq:m3n-objective}.
Max-oracle has been used by greedy algorithms such as cutting plane
(\BMRM\ and \svmstruct) and sequential minimal optimization
\cite[\smo,][Chapter 6]{Taskar04}.  \smo\ picks the steepest descent
coordinate in the dual and greedily optimizes the quadratic
analytically, but its convergence rate is slower than \BMRM\ by a
factor $n$.  The max-oracle again relies on graphical models for
dynamical programming \cite{KscFreLoe01}, and many existing
combinatorial optimizers can also be used, such as in the applications
of matching \cite{TasLacKle05} and context free grammar parsing
\cite{TasKleColKoletal04}.  Furthermore, this oracle is
particularly useful for solving the slack rescaling variant of \mcn\
proposed by \cite{TsoJoaHofAlt05}:
\begin{align}
\label{eq:slack-objective}
    J(\wb) & = \frac{\lambda}{2} \nbr{\wb}^2 + \frac{1}{n} \sum_{i=1}^{n}
    \max_{\yb \in \Ycal} \cbr{ \ell(\yb, \yb^{i}; \xb^{i}) \rbr{1-
        \inner{\wb}{\phivec(\xb^{i}, \yb^{i}) - \phivec(\xb^{i}, \yb)}}}.
\end{align}
Here two factorized terms get multiplied, which causes additional
complexity in finding the maximizer.  \cite[][Section
1.4.1]{AltHofTso07} solved this problem by a modified dynamic
program.  Nevertheless, it is not clear how \expgrad\ or our method
can be used to optimize this objective.

In the quest for faster optimization algorithms for \mcn s, the
following three questions are important: how hard is it to optimize
\mcn\ intrinsically, how informative is the oracle which is the only way
for the algorithm to access the objective function, and how well does
the algorithm make use of such information.  The superiority of our
algorithm suggests that the exp-oracle is more informative than the max-oracle, and a deeper explanation is that the max-oracle is local while
the exp-oracle is not \cite[][Section 1.3]{NemYud83}.  Hence there is
no surprise that the less informative max-oracle is easier to compute,
which makes it applicable to a wider range of problems such as
\eqref{eq:slack-objective}.  Moreover, the comparison between \expgrad\
and our algorithm shows that even if the exp oracle is used, the
algorithm still needs to make good use of it in order to converge
faster.

For future research, it is interesting to study the lower bound
complexity for optimizing \mcn, including the dependence on $\epsilon$,
$n$, $\lambda$, $\Ycal$, and probably even on the graphical model
topology.  Empirical evaluation of our algorithm is also desirable,
along the lines of sequence labeling, word alignment, context free
grammar parsing, etc.

\bibliographystyle{plain}

{\small
  \bibliography{EG_Updates}
}

\appendix
\section*{Appendix (to be considered in the 13 page limit)}

\section{Proof of Theorem \ref{thm:excessive_condi_satisfy}}
\label{sec:app_proof_exc_cond_meet}

To prove Theorem \ref{thm:excessive_condi_satisfy}, we begin with a technical lemma.
\begin{lemma}
\label{lamma:nesterov:helper_alpha}
(Lemma 7.2 of \cite{Nesterov05a})
  For any $\val$ and $\valbar$, we have
  \begin{align*}
    D(\val) + \inner{\grad D(\val)}{\valbar - \val} \ge - g(\valbar) +
    \inner{A \wb(\val)}{\valbar} + f(\wb(\val)).
  \end{align*}
\end{lemma}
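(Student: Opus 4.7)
The plan is to unpack the two objects $D(\val)$ and $\grad D(\val)$ using the definition of $\wb(\val)$ as the maximizer in the Fenchel conjugate $\fstar(-A^{\top}\val)$, and then reduce the whole inequality to the first-order convexity condition for $g$. The starting point is the identity
\begin{align*}
 D(\val) \;=\; -g(\val) - \fstar(-A^{\top}\val) \;=\; -g(\val) + \inner{A\wb(\val)}{\val} + f(\wb(\val)),
\end{align*}
which follows because $\wb(\val) = \argmax_{\wb \in Q_1}\cbr{\inner{-A\wb}{\val} - f(\wb)}$ attains the supremum in the definition of $\fstar(-A^{\top}\val)$, so that $\fstar(-A^{\top}\val) = -\inner{A\wb(\val)}{\val} - f(\wb(\val))$.

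Next I would compute $\grad D(\val)$. Since $f$ is $\rho$-strongly convex, Lemma \ref{theorem:SC_LCG} gives that $\fstar$ is everywhere differentiable with $\grad \fstar(-A^{\top}\val) = \wb(\val)$, so by the chain rule applied to $\val \mapsto \fstar(-A^{\top}\val)$ one gets
\begin{align*}
 \grad D(\val) \;=\; -\grad g(\val) + A\,\wb(\val).
\end{align*}
Substituting this and the identity for $D(\val)$ into the left-hand side of the claimed inequality and grouping the terms that contain $\wb(\val)$ yields
\begin{align*}
 D(\val) + \inner{\grad D(\val)}{\valbar-\val}
 \;=\; \Bigl[-g(\val) - \inner{\grad g(\val)}{\valbar-\val}\Bigr] + \inner{A\wb(\val)}{\valbar} + f(\wb(\val)),
\end{align*}
after the inner products $\inner{A\wb(\val)}{\val}$ cancel against the $-\val$ part of $\inner{A\wb(\val)}{\valbar-\val}$. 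The bracketed piece depends only on $g$; the remaining two terms are exactly the last two terms on the right-hand side of the lemma.

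The statement therefore reduces to the single inequality $-g(\val) - \inner{\grad g(\val)}{\valbar-\val} \ge -g(\valbar)$, i.e.\ $g(\valbar) \ge g(\val) + \inner{\grad g(\val)}{\valbar - \val}$, which is the first-order characterization of convexity of $g$ (recall $g$ is convex and $L_g$-\lcg\ on $Q_2$, hence differentiable). I expect no real obstacle in the algebra; the only subtle point is justifying the formula for $\grad D$, and this is what forces us to invoke the differentiability of $\fstar$ obtained from strong convexity of $f$ via Lemma \ref{theorem:SC_LCG}. Everything else is a bookkeeping cancellation followed by the convexity inequality for $g$.
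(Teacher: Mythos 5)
Your proposal is correct and follows essentially the same route as the paper's proof: expand $D(\val)$ and $\grad D(\val)$ via the identity $D(\val) = -g(\val) + \inner{A\wb(\val)}{\val} + f(\wb(\val))$ coming from $\wb(\val)$ attaining the supremum in $\fstar(-A^{\top}\val)$, cancel the $\inner{A\wb(\val)}{\cdot}$ terms, and finish with the first-order convexity inequality for $g$. The paper states this as a one-line ``direct calculation''; your version merely spells out the justification of $\grad D(\val) = -\grad g(\val) + A\wb(\val)$ via the differentiability of $\fstar$, which is a welcome but not substantively different elaboration.
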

\begin{proof}
  Direct calculation by plugging in \eqref{eq:alpha_to_w} into
  \eqref{eq:dual} and using the convexity of $g$ yields
  \begin{align*}
    D(\val) + \inner{\grad D(\val)}{\valbar - \val} &= - g(\val) +
    \inner{A\wb(\val)}{\val} + f(\wb(\val)) + \inner{-\grad g(\val)
      + A \wb(\val)}{\valbar - \val} \\
    &\ge - g(\valbar) + \inner{A\wb(\val)}{\valbar} + f(\wb(\val)).
  \end{align*}
  \vspace{-2.6em}
  
\end{proof}

Furthermore, because $d$ is $\sigma$-strongly convex, it follows
that
\begin{align}
  \label{eq:bregman_ge_normsq}
  \Delta(\valbar, \val) = d(\valbar) - d(\val) - \inner{\grad
    d(\val)}{\valbar - \val} \ge \frac{\sigma}{2} \nbr{\valbar -
    \val}^{2}_{2}.
\end{align}
As $\val_{0}$ minimizes $d$ over $Q_{2}$, we have
\begin{align}
  \label{eq:opt_cond_d2}
  \inner{\grad d(\val_{0})}{\val - \val_{0}} \ge 0 \qquad \forall\ \val \in Q_2.
\end{align}

We first show that the initial $\wb_1$ and $\val_1$ satisfy the excessive gap condition \eqref{eq:excessive_gap_condition}.
Since $-D$ is $L$-\lcg, so
\begin{align*}
  D(\val_{1}) & \ge D(\val_{0}) + \inner{\grad D(\val_{0})}{\val_{1} -
    \val_{0}} - \frac{1}{2} L \nbr{\val_{1} - \val_{0}}^2 \\
  (\text{using defn. of } \mu_{1} \text{ and
    \eqref{eq:bregman_ge_normsq}}) & \ge D(\val_{0}) + \inner{\grad
    D(\val_{0})}{\val_{1} - \val_{0}} - \mu_{1} \Delta(\val_{1}, \val_{0}) \\
  (\text{using defn.\ of } \val_{1}) &= D(\val_{0}) - \mu_{1}
  \min_{\val \in Q_2} \cbr{-\frac{1}{\mu_{1}} \inner{ \grad
      D(\val_{0})}{\val - \val_{0}} + \Delta(\val, \val_{0})} \\
  (\text{using } \eqref{eq:opt_cond_d2} \text{ and } d(\val_{0}) = 0) &
  \ge D(\val_{0}) - \mu_{1} \min_{\val \in Q_2} \cbr{-\frac{1}{\mu_{1}}
    \inner{\grad D(\val_{0})}{\val - \val_{0}}
    + d(\val)} \\
  & = \max_{\val \in Q_2} \cbr{D(\val_{0}) + \inner{\grad
      D(\val_{0})}{\val - \val_{0}} - \mu_{1}\, d(\val)} \\
  (\text{using Lemma }\ref{lamma:nesterov:helper_alpha}) & \ge \max_{\val
    \in Q_2} \cbr{-g(\val) + \inner{A \wb(\val_{0})}{\val} +
    f(\wb(\val_{0})) - \mu_{1}\, d(\val)} \\
  & = J_{\mu_{1}} (\wb_1),
\end{align*}
which shows that our initialization indeed satisfies
\eqref{eq:excessive_gap_condition}. Second, we prove by induction that
the updates in Algorithm \ref{algo:nesterov05_bregman} maintain
\eqref{eq:excessive_gap_condition}.  We begin with two useful
observations. Using \eqref{eq:mu2-update} and the definition of
$\tau_{k}$, one can bound
\begin{align}
  \label{eq:tau-mu-bound}
  \mu_{k+1} = \frac{6}{(k+3)(k+2)}
  \frac{L}{\sigma} \geq \tau_{k}^{2} \frac{L}{\sigma}.
\end{align}
Let $\vbeta := \val_{\mu_{k}}(\wb_{k})$. The optimality conditions for
\eqref{eq:w_to_alpha} imply
\begin{align}
  \label{eq:opt_cond_alpha}
  \inner{\mu_{k} \grad d(\vbeta) - A \wb_{k} + \grad g(\vbeta)}{\val - \vbeta} \ge 0.
\end{align}

By using the update equation for $\wb_{k+1}$ and the convexity of $f$
\begin{align*}
J_{\mu_{k+1}}(\wb_{k+1}) & = f(\wb_{k+1}) + \max_{\val \in Q_2}
\cbr{\inner{A \wb_{k+1}}{\val} - g(\val) - \mu_{k+1}
  d(\val)} \\
& = f((1-\tau_{k}) \wb_{k} + \tau_{k} \wb(\valhat))  \\
& \; \; \; + \max_{\val \in Q_2} \left\{(1-\tau_{k})
\inner{A \wb_{k}}{\val} + \tau_{k} \inner{A
      \wb(\valhat)}{\val}- g(\val) - (1-\tau_{k})\mu_{k}
d(\val) \right\} \\
  & \le \, \, \max_{\val \in Q_2} \left\{ (1-\tau_{k}) T_{1} + \tau_{k}
T_{2} \right \},
\end{align*}
\[
  \text{where } T_{1}  = \sbr{-\mu_{k} d(\val) + \inner{A \wb_{k}}{\val} -
    g(\val) + f(\wb_{k})} \text{ and } T_{2} = \sbr{-g(\val) +
    \inner{A \wb(\valhat)}{\val} + f(\wb(\valhat))}.
\]
$T_{1}$ can be bounded as follows
\begin{align*}
  T_{1} & = -\mu_{k} d(\val) + \inner{A \wb_{k}}{\val} - g(\val) +
  f(\wb_{k}) \\
  (\text{using defn.\ of } \Delta) & = -\mu_{k} \cbr{\Delta(\val,
    \vbeta) + d(\vbeta) + \inner{\grad d(\vbeta)}{\val - \vbeta}} +
  \inner{A\wb_{k}}{\val} - g(\val) + f(\wb_{k}) \\
  (\text{using } \eqref{eq:opt_cond_alpha}) & \le -\mu_{k} \Delta(\val,
  \vbeta) - \mu_{k} d(\vbeta) + \inner{-A\wb_{k} + \grad g(\vbeta)}{\val
    - \vbeta} + \inner{A \wb_{k}}{\val} - g(\val) + f(\wb_{k}) \\
  &=  -\mu_{k} \Delta(\val, \vbeta) - \mu_{k} d(\vbeta) + \inner{A
    \wb_{k}}{\vbeta} - g(\val) + \inner{\grad g(\vbeta)}{\val - \vbeta}
  + f(\wb_{k}) \\
  (\text{using convexity of g}) & \le -\mu_{k} \Delta(\val, \vbeta) -
  \mu_{k} d(\vbeta) + \inner{A \wb_{k}}{\vbeta} - g(\vbeta) + f(\wb_{k})
  \\
  (\text{using defn.\ of } \vbeta) & =  -\mu_{k} \Delta(\val, \vbeta) +
  J_{\mu_{k}}(\wb_{k}) \\
  (\text{using induction assumption}) & \le  -\mu_{k} \Delta(\val, \vbeta) + D(\val_{k}) \\
  (\text{using concavity of } D) & \le -\mu_{k} \Delta(\val, \vbeta) +
  D(\valhat) + \inner{\grad D(\valhat)}{\val_{k} - \valhat},
\end{align*}
while $T_{2}$ can be bounded by using Lemma \ref{lamma:nesterov:helper_alpha}:
\begin{align*}
  T_{2} = -g(\val) + \inner{A \wb(\valhat)}{\val} + f(\wb(\valhat)) \le
  D(\valhat) + \inner{\grad D(\valhat)}{\val - \valhat}.
\end{align*}

Putting the upper bounds on $T_{1}$ and $T_{2}$ together, we obtain the
desired result.
\begin{align*}
  J_{\mu_{k+1}}(\wb_{k+1}) & \le \max_{\val \in Q_2} \left \{
    (1-\tau_{k}) \sbr{-\mu_{k} \Delta(\val, \vbeta) + D(\valhat) +
      \inner{\grad D(\valhat)}{\val_{k} - \valhat}} + \tau_{k}
    \sbr{D(\valhat) + \inner{\grad
        D(\valhat)}{\val - \valhat}}  \right\} \\
  & = \max_{\val \in Q_2} \cbr{-\mu_{k+1} \Delta(\val, \vbeta) +
    D(\valhat) + \inner{\grad D(\valhat)}{(1-\tau_{k})\val_{k} +
      \tau_{k} \val - \valhat}} \\
  (\text{using defn.\ of } \valhat)& = \max_{\val \in Q_2}
  \cbr{-\mu_{k+1} \Delta(\val, \vbeta) + D(\valhat) + \tau_{k}
    \inner{\grad D(\valhat)}{\val - \vbeta}} \\
  & = -\min_{\val \in Q_2} \cbr{\mu_{k+1} \Delta(\val, \vbeta) -
    D(\valhat) - \tau_{k} \inner{\grad D(\valhat)}{\val - \vbeta} } \\
  (\text{using defn.\ of } \valtil) & = -\mu_{k+1} \Delta(\valtil,
  \vbeta) + D(\valhat) + \tau_{k} \inner{\grad D(\valhat)}{\valtil -
    \vbeta} \\
  (\text{using } \eqref{eq:bregman_ge_normsq}) & \le
  -\smallfrac{1}{2}\mu_{k+1} \nbr{\valtil - \vbeta}^2 + D(\valhat) +
  \tau_{k} \inner{\grad D(\valhat)}{\valtil - \vbeta} \\
  (\text{using } \eqref{eq:tau-mu-bound})& \le
  -\smallfrac{1}{2}\tau_{k}^2 L \nbr{\valtil - \vbeta}^2 + D(\valhat) +
  \tau_{k} \inner{\grad D(\valhat)}{\valtil - \vbeta} \\
  (\text{using defn. of }\val_{k+1}) & = -\smallfrac{1}{2} L
  \nbr{\val_{k+1} - \valhat}^2 + D(\valhat) + \inner{\grad
    D(\valhat)}{\val_{k+1} -
    \valhat} \\
  (\text{by }L \text{-\lcg\ of } -D) & \le D(\val_{k+1}).
\end{align*}

\section{Proof of Corollary \ref{thm:rate_conv_dual_gap}}
\label{sec:app_dual_gap}

\begin{align*}
  D(\val_{k+1}) &\ge J_{\mu_{k+1}}(\wb_{k+1}) = f(\wb_{k+1}) +
  \max_{\val} \cbr{\inner{A\wb_{k+1}}{\val} - g(\val) - \mu_{k+1}
    d(\val)} \\
  &\ge f(\wb_{k+1}) + \inner{A\wb_{k+1}}{\val^*} -
  g(\val^*) - \mu_{k+1} d(\val^*) \\
  &\ge - g(\val^*) + \min_{\wb}
  \cbr{f(\wb) + \inner{A \wb}{\val^*}} - \mu_{k+1} d(\val^*) \\
  &= D(\val^*) - \mu_{k+1} d(\val^*).
\end{align*}

\section{Primal and Dual Objective Evaluation using Clique Decomposition}
\label{sec:primal_dual_eval}

We show how to efficiently compute the primal and dual objective
function values.  The primal objective value is easy due to the
convenience in computing $\nbr{\wb_k}^2$ and inner products between
$\wb_k$ and feature vectors.  Afterwards any MAP algorithm can be used
to find the $\max_{\yvec \in \Ycal}$. The dual objective \eqref{eq:dval-m3n} is also easy
since
\begin{align*}
  \sum_i \sum_{\yvec} \ell^i_{\yvec} (\val_k)^i_{\yvec}
  = \sum_i \sum_{\yvec} \sum_c \ell^i_{y_c} (\val_k)^i_{\yvec} =
  \sum_i \sum_c \sum_{y_c} \ell^i_{y_c}\sum_{\yvec: \yvec|_{c}= y_c}(\val_k)^i_{\yvec}
  = \sum_{i,c,y_c} \ell^i_{y_c} (\val_k)^i_{y_c},
\end{align*}
and the marginals of $\val_k$ are available. Finally, the quadratic
term in $D(\alphavec_k)$ can be computed as follows.
\begin{align*}
  \nbr{A^{\top} \alphavec_k}^2 = \nbr{\sum_{i, \yvec}
    \psivec^i_{\yvec} (\alphavec_k)^i_{\yvec}}^2 = \sum_c
  \nbr{\sum_{i, y_c} \psivec^i_{y_c} (\alphavec_k)^i_{ y_c}}^2 =
  \sum_c \sum_{i,j,y_c,y'_c} (\val_k)^i_{y_c} (\val_k)^j_{y'_c}
  k_c((\xvec^i,y_c), (\xvec^j, y'_c)),
\end{align*}
where the inner term is the same as the unnormalized expectation that
can be efficiently calculated. The last formula is only for nonlinear
kernels.

\section{Kernelizing the Excessive Gap Method for \mcn s}
\label{sec:app_kernel}

Compared with the linear kernel case, the only difficulty caused by nonlinear kernels is that the $\wb_k$ cannot be expressed explicitly.  However, if $\wb_k$ can be expressed as the expectation of the feature vector with respect to some distribution $\beta_k \in \Scal^n$, then we only need to update $\wb_k$ implicitly via $\beta_k$, and the inner product between $\wb_k$ and any feature vector can also be efficiently calculated.  We formalize and prove this claim by induction.

\begin{theorem}
  For all $k \ge 0$, there exists $\betavec_k \in \Scal^n$, such that $(\wb_k)_c = \frac{1}{\lambda} \FF[\psivec_c; \betavec_k]$, and $\betavec_k$ can be updated by
  \[
  \betavec_{k+1} = (1-\tau_k) \betavec_k + \tau_k \valhat_k.
  \]
\end{theorem}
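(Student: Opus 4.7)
The approach is a straightforward induction on $k$, with the key observation being that the map $\valvec \mapsto \FF[\psivec_c; \valvec]$ is \emph{linear} in its second argument, so convex combinations of points in $\Scal^n$ translate directly into convex combinations of the corresponding expectations.

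For the base case, I would take $\betavec_1 := \val_0$, the uniform distribution, which clearly lies in $\Scal^n$. By the initialization step of Algorithm \ref{algo:nesterov05_bregman_struct}, $(\wb_1)_c = \tfrac{1}{\lambda}\FF[\psivec_c; \val_0] = \tfrac{1}{\lambda}\FF[\psivec_c; \betavec_1]$, as required. (The case $k=0$ is vacuous, since $\wb_0$ does not appear in the algorithm.)

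For the inductive step, assume $(\wb_k)_c = \tfrac{1}{\lambda}\FF[\psivec_c; \betavec_k]$ with $\betavec_k \in \Scal^n$. The weight update in Algorithm \ref{algo:nesterov05_bregman_struct} reads
\begin{align*}
(\wb_{k+1})_c
 = (1-\tau_k)(\wb_k)_c + \tfrac{\tau_k}{\lambda}\sum_i \FF[\psivec^i_{y_c};\valhat_c]
 = \tfrac{1-\tau_k}{\lambda}\FF[\psivec_c;\betavec_k] + \tfrac{\tau_k}{\lambda}\FF[\psivec_c;\valhat],
\end{align*}
where I used the inductive hypothesis and the definition $\FF[\psivec_c;\valvec] = \sum_i \FF[\psivec^i_{y_c};\valvec]$ from \eqref{eq:unnormalized_dist}. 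Because $\FF[\psivec_c;\cdot]$ is linear in the coefficients $\alpha^i_{\yvec}$, this equals $\tfrac{1}{\lambda}\FF[\psivec_c;(1-\tau_k)\betavec_k + \tau_k\valhat]$, so defining $\betavec_{k+1} := (1-\tau_k)\betavec_k + \tau_k \valhat$ gives the claimed representation.

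It remains to check that $\betavec_{k+1} \in \Scal^n$. The set $\Scal^n$ is convex, $\tau_k \in (0,1)$, and $\betavec_k \in \Scal^n$ by induction, so it suffices to verify $\valhat \in \Scal^n$. This is immediate: $\valhat = (1-\tau_k)\val_k + \tau_k\val_{\mu_k}(\wb_k)$, where $\val_k \in \Scal^n$ by the outer invariant of the algorithm, and $\val_{\mu_k}(\wb_k)$ lies in $\Scal^n$ because the softmax form \eqref{eq:alpha_mu_k_wb_k} is nonnegative and satisfies $\sum_{\yvec}(\val_{\mu_k}(\wb_k))^i_{\yvec} = \tfrac{1}{n}$ for each $i$. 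The only mildly subtle point — and the one I would flag as the main thing to get right — is that one never needs to materialize $\betavec_k$ itself explicitly: the quantities actually required by the algorithm are the clique marginals of $\betavec_k$, which can be tracked incrementally via the same convex combination $(\betavec_{k+1})_c = (1-\tau_k)(\betavec_k)_c + \tau_k \valhat_c$, and the inner products $\inner{\wb_k}{\psivec^i_{y_c}}$ needed downstream become $\tfrac{1}{\lambda}\sum_{j,y'_c}(\betavec_k)^j_{y'_c} k_c((\xvec^j,y'_c),(\xvec^i,y_c))$, which is computable purely from kernel evaluations and the marginals of $\betavec_k$.
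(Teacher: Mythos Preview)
Your proof is correct and follows essentially the same route as the paper: induction on $k$ with base case $\betavec_1 = \val_0$, and the inductive step using linearity of $\FF[\psivec_c;\cdot]$ together with the update $(\wb_{k+1})_c = (1-\tau_k)(\wb_k)_c + \tfrac{\tau_k}{\lambda}\FF[\psivec_c;\valhat]$ to read off $\betavec_{k+1} = (1-\tau_k)\betavec_k + \tau_k \valhat$. Your additional explicit verification that $\valhat \in \Scal^n$ and your remark on tracking only the clique marginals of $\betavec_k$ are accurate and match the paper's post-theorem discussion.
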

\begin{proof}
First, $\wb_1 = \wb(\val_{0}) = \frac{1}{\lambda} \mathop{\oplus}\nolimits_{c \in \Ccal} \FF[\psivec_c; \val_0]$, so $\betavec_1 = \val_0$.  Suppose the claim holds for all $1, \ldots, k$, then
\begin{align*}
(\wb_{k+1})_c &= (1-\tau_k) (\wb_k)_c + \frac{\tau_k}{\lambda} \FF[\psivec_c; (\valhat_k)_c] = (1-\tau_k) \frac{1}{\lambda} \FF[\psivec_c; \betavec_k] + \frac{\tau_k}{\lambda} \FF[\psivec_c; (\valhat_k)_c] \\
&= \frac{1}{\lambda} \FF[\psivec_c; (1-\tau_k) (\betavec_k)_c + \tau_k (\valhat_k)_c].
\end{align*}
Therefore, we can set $\betavec_{k+1} = (1-\tau_k) \betavec_k + \tau_k \valhat_k \in \Scal^n$.
\end{proof}

In general $\valhat_k \neq \valtil_k$, hence $\betavec_k \neq \val_k$.  To compute $\inner{\psivec^i_{y_c}}{(\wb_k)_c}$ required by \eqref{eq:valtil-factor}, we have
\[
\inner{\psivec^i_{y_c}}{(\wb_k)_c} = \inner{\psivec^i_{y_c}}{\frac{1}{\lambda} \sum_j \sum_{y'_c} \beta^j_{y'_c}\psivec^j_{y'_c} } = \frac{1}{\lambda} \sum_j \sum_{y'_c} \beta^j_{y'_c} k_c((\xvec^i, y_c),(\xvec^j, y'_c)).
\]
And by using this trick, all the iterative updates in Algorithm
\ref{algo:nesterov05_bregman_struct} can be done efficiently.  So is the
evaluation of $\nbr{\wb_k}^2$ and the primal and dual objectives.  We
leave the details to the reader.

\newpage
\begin{center}
  {\Large \textbf{Supplementary Material}}
\end{center}

\vspace{2em}

\section{Proof of Lemma \ref{lem:dual-g-mud}}
\label{sec:proof_lem:dual-g-mud}

\begin{proof}
  Using \eqref{eq:m3n-gdef} and \eqref{eq:m3n-proxdef} we can write
  \begin{align*}
    (g + \mu d)^{\star} (\ub) & = \sup_{\val \in \Scal^{n}} \{
    \inner{\ub}{\val} - g(\val) - \mu d(\val) \}  \\
    & = \sup_{\val \in \Scal^{n}} \sum_{i} \sum_{\yb} u^i_{\yb}
    \alpha^{i}_{\yb} + \sum_{i} \sum_{\yb} \ell^{i}_{\yb} \alpha^{i}_{\yb} -
    \mu \sum_{i} \sum_{\yb} \alpha^{i}_{\yb} \log \alpha^{i}_{\yb} - \mu \log
    n - \mu \log
    \abr{\Ycal} \\
    & = \sup_{\val \in \Scal^{n}} \sum_{i} \sum_{\yb} (u^i_{\yb} +
    \ell^{i}_{\yb} - \mu \log \alpha^{i}_{\yb}) \alpha^{i}_{\yb} - \mu \log n
    - \mu \log \abr{\Ycal}
  \end{align*}
  By introducing non-negative Lagrange multipliers $\sigma_{i}$ we can
  write the partial Lagrangian of the above maximization problem:
  \begin{align*}
    L (\val, \sigma) = \sup_{\val \in \Scal^{n}} \sum_{i} \sum_{\yb}
    (u^i_{\yb} + \ell^{i}_{\yb} - \mu \log \alpha^{i}_{\yb}) \alpha^{i}_{\yb}
    - \mu \log n - \mu \log \abr{\Ycal} - \sum_i \sigma_i \rbr{\sum_{\yb}
      \alpha^{i}_{\yb}-\frac{1}{n}}.
  \end{align*}
  Taking partial derivative with respect $\alpha^{i}_{\yb}$ and setting it
  to 0, we get
  \begin{align*}
    u^i_{\yb} + \ell^{i}_{\yb} - \mu \log \alpha^{i}_{\yb} - \mu - \sigma_i = 0.
  \end{align*}
  Therefore
  \begin{align*}
    \alpha^{i}_{\yb} = \frac{\exp \rbr{\frac{u^i_{\yb} +
          \ell^{i}_{\yb}}{\mu}}}{n Z_i}, \qquad \text{where } Z_i :=
    \sum_{\yb} \exp \rbr{\frac{u^i_{\yb} + \ell^{i}_{\yb}}{\mu}}.
  \end{align*}
  Plugging this back to the Lagrangian, we can eliminate both $\val$ and
  $\sigma_{i}$ and write out the solution of the optimization problem in
  closed form
  \begin{align*}
    \sum_{i,\yb} (\mu \log Z_{i} + \mu \log n) \alpha^{i}_{\yb} - \mu \log n -
    \mu \log \abr{\Ycal} = \frac{\mu}{n} \sum_{i=1}^{n} \log \sum_{\yb \in
      \Ycal} \exp \rbr{\frac{u^i_{\yb} + \ell^{i}_{\yb}}{\mu}} - \mu \log
    \abr{\Ycal}.
  \end{align*}
\end{proof}

\end{document}